\newcommand{\Var}{\text{Var}}
\newcommand{\vP}{\mathbf{P}}
\newcommand{\vQ}{\mathbf{Q}}
\newcommand{\vdx}{\mathbf{dX}}
\newcommand{\vX}{\mathbf{X}}
\newcommand{\vT}{\mathbf{T}}
\newcommand{\vY}{\mathbf{Y}}
\newtheorem{lemma}{Lemma}
\newtheorem{theorem}{Theorem}
\newtheorem{remark}{Remark}
\DeclareMathOperator{\E}{\mathbb{E}}
\begin{document}

%

%

\twocolumn[

\aistatstitle{Asymptotically  Optimal Change Detection for Unnormalized  Pre- and Post-Change Distributions }

\aistatsauthor{Arman Adibi\textsuperscript{1} \And Sanjeev Kulkarni\textsuperscript{1} \And H. Vincent Poor\textsuperscript{1} \And Taposh Banerjee\textsuperscript{2} \And Vahid Tarokh\textsuperscript{3}}

\aistatsaddress{\textsuperscript{1}Princeton University \And \textsuperscript{2}University of Pittsburgh \And \textsuperscript{3}Duke University}
 
 ]

\begin{abstract}
This paper addresses the problem of detecting changes when only unnormalized pre- and post-change distributions are accessible. This situation happens in many scenarios in physics such as in ferromagnetism, crystallography, magneto-hydrodynamics, and thermodynamics, where the energy models are difficult to normalize.

Our approach is based on the estimation of the Cumulative Sum (CUSUM)  statistics, which is known to produce optimal performance. We first present an intuitively appealing approximation method. Unfortunately, this produces a biased estimator of the CUSUM statistics and may cause performance degradation. We then propose the Log-Partition Approximation Cumulative Sum (LPA-CUSUM) algorithm based on thermodynamic integration (TI)  in order to estimate  the log-ratio of normalizing constants of pre- and post-change distributions. It is proved that this approach gives an unbiased estimate of the log-partition function and the CUSUM statistics, and leads to an asymptotically optimal performance. Moreover, we derive a relationship between the required sample size for thermodynamic integration and the desired detection delay performance, offering guidelines for practical parameter selection. Numerical studies are provided demonstrating the efficacy of our approach.
\end{abstract}
\section{INTRODUCTION}
Quickest change detection is a fundamental problem in various domains, such as control, signal processing, machine learning, and finance \cite{veeravalli2001decentralized,xie2021sequential}. In many real-world applications in physics and machine learning \cite{holzmuller2023convergence}, we have access to the unnormalized version of the distributions, while most change detection algorithms require access to the normalized version of the distributions, necessitating computation of the normalizing constants. However, computing the normalization constants of distributions is challenging because it often involves evaluating integrals or summations over high-dimensional spaces, which is typically infeasible \cite{huber2015approximation}. Specifically, for many energy-based models, this computation is NP-hard and even when feasible, it remains computationally difficult and expensive \cite{kolmogorov2018faster}. This paper tackles the problem of change detection when the normalizing constants of the pre-change and post-change distributions are intractable.

We propose a change detection method that utilizes thermodynamic integration (TI), a technique from statistical physics that estimates intractable normalizing constants for high-dimensional distributions. TI relies on the insight that estimating the ratio of two unknown normalizing constants is more feasible than directly computing the constants themselves.

Our approach builds upon the CUSUM algorithm proposed by  \cite{page1955test}, which is designed for sequential change detection when the pre-change and post-change distributions are known. However, in our setting, we do not have access to the normalizing constants of these distributions. By incorporating TI into the CUSUM algorithm, we can estimate the log-ratio of the normalizing constants using an unbiased estimator with bounded variance.

The main contributions of this paper are:

\begin{itemize}
   \item We introduce a novel change detection algorithm, LPA-CUSUM, that combines the CUSUM framework with TI to handle scenarios where the normalizing constants of the pre-change and post-change distributions are intractable. 
   \item We provide a theoretical analysis of the false alarm and delay properties of the proposed LPA-CUSUM algorithm, demonstrating its statistical guarantees. Furthermore, we prove that LPA-CUSUM is asymptotically optimal.
   \item We derive a relationship between the number of samples required for the TI estimator and the desired delay performance, providing practical guidelines for setting the algorithm's parameters.
   \item We evaluate the performance of LPA-CUSUM on synthetic datasets, showcasing its effectiveness in detecting changes in complex distributions.
\end{itemize}

\section{RELATED WORKS}
\subsection{Change Detection}
Change detection algorithms play a critical role in various fields, including sensor networks, cyber-physical systems, and neuroscience, where detecting abrupt changes in data streams is important  \cite{veeravalli2014quickest,poor2008quickest}. In the literature, various optimality results have been established for classical change detection algorithms  \cite{shiryaev1963optimum,moustakides1986optimal,page1955test,roberts1966comparison,lai1998information,tartakovsky2014sequential}. Notably, the Cumulative Sum (CUSUM) algorithm has been shown to be asymptotically optimal under certain formulations, such as Lorden's problem and Pollak's problem  \cite{lorden1971procedures,pollak1985optimal}.

However, these optimality results are based on the assumption of known distributions before and after the change, which may not hold in practical scenarios \cite{yu2016statistical,chen2015graph,nalisnick2018deep,wu2023quickest}. Moreover, likelihood-based algorithms, such as the CUSUM algorithm, are widely used but may face difficulties in scenarios where explicit distributions are computationally challenging to compute. To address these challenges, Wu et al.~\cite{wu2023quickest} proposed the Score-based CUSUM (SCUSUM) algorithm, which enables change detection for unnormalized statistical models with unknown normalization constants or when the Hyv\"arinen Score of the distributions are available. While SCUSUM
is efficient when exactly computing the normalizing
constants $Z_1$, and $Z_0$ is not possible, in many cases, we
can compute an estimator of the normalizing
constants, and we will show how the use of such an
estimator can help us achieve performance close to the
CUSUM algorithm.

\subsection{Partition Function Estimation}\label{}
Estimating partition functions is a key challenge for complex distributions such as energy-based models (Gibbs distributions). One of the most common approaches is annealed importance sampling (AIS), introduced by Neal~ \cite{neal2001annealed}. AIS has become a standard method for partition function estimation, owing to its ability to handle complex probabilistic models. Subsequent works have addressed limitations of the original AIS algorithm by improving convergence properties and extending theoretical understanding, such as in the works by Karagiannis and Andrieu~ \cite{karagiannis2013annealed} and Holzmüller et al.~ \cite{holzmuller2023convergence}. Adaptive versions of AIS have been proposed to enhance efficiency, such as the adaptive AIS by Štefankovič et al.~ \cite{vstefankovivc2009adaptive}, which dynamically adjusts the annealing schedule based on observed performance, thereby improving computational efficiency in challenging scenarios. For discrete distributions, Haddadan et al.~ \cite{haddadan2021fast} developed the fast annealed importance sampling (FAIS) algorithm, which combines AIS with sequential Monte Carlo methods to achieve significant computational savings while retaining accuracy.

Another widely used technique for log-partition function estimation is thermodynamic integration (TI), which has applications in both physics and chemistry  \cite{frenkel2023understanding,friel2012estimating}. TI is based on calculating the ratio of unknown normalizing constants rather than the constants themselves, thereby simplifying the estimation process  \cite{kirkwood1935statistical,gelman1998simulating}. TI provides an unbiased estimator with bounded variance, making it suitable for complex, high-dimensional distributions. Ge et al.~ \cite{ge2020estimating} analyzed an annealing algorithm combined with multilevel Monte Carlo sampling for estimating log-partition functions in log-concave settings, providing important theoretical insights, including information-based lower bounds on achievable convergence rates. Recent work by Marteau-Ferey et al.~ \cite{marteau2022sampling} proposed an approach involving sampling via log-partition function estimation, which can be particularly useful in high-dimensional scenarios. Bach~ \cite{bach2022information, bach2024sum} further explored this domain by proposing sum-of-squares relaxations for variational inference, offering a new direction for addressing the challenges of partition function estimation.\vspace{-1mm}
\section{PROBLEM FORMULATION}
Quickest change detection refers to the problem of determining whether or not a change has occurred in a sequence of independent random variables $\{\vX_n\}_{n \geq 1}$, defined on the probability space $(\Omega, \mathcal{F}, \vQ_\nu)$. For each $n$, let $\mathcal{F}_n$ be the $\sigma$-algebra generated by the random variables $\vX_1, \vX_2, \dots, \vX_n$. We define $F$ to be the $\sigma$-algebra generated by the union of all sub-$\sigma$-algebras $\mathcal{F}_n$, i.e., $\mathcal{F}=\sigma(\cup_{n\geq1} \mathcal{F}_n)$.

Under $\vQ_\nu$, $\vX_1, \vX_2, \dots, \vX_{\nu-1}$ are i.i.d. according to a density $\vP_0$, and $\vX_\nu, \vX_{\nu+1}, \dots$ are i.i.d. according to a density $\vP_1$. We think of $\nu$ as the change point, $\vP_0$ as the pre-change density, and $\vP_1$ as the post-change density. We use $\E_\nu$ and $\text{Var}_\nu$ to denote the expectation and the variance associated with the measure $\vQ_\nu$, respectively. We use $\vP_0$ to denote the measure under which there is no change, with $\E_0$ denoting the corresponding expectation.

A change detection algorithm is a stopping time $T$ with respect to the data stream $\{\vX_n\}_{n\geq1}$:
\begin{align*}
\{T \leq n\} \in \mathcal{F}_n.
\end{align*}

If $T \geq \nu$, we have made a delayed detection; otherwise, a false alarm has happened. Intuitively, there is a trade-off between detection delay and false alarms. We consider two minimax problem formulations to find the best stopping rule.

In  \cite{lorden1971procedures}, the following metric, the worst-case averaged detection delay (WADD), is defined:
\begin{align*}
L_{\text{WADD}}(T) = \sup_{\nu\geq1} \text{ess sup} \, \E_\nu[(T - \nu + 1)^+ | \mathcal{F}_{\nu-1}], 
\end{align*}
where $(y)^+ = \max(y, 0)$ for any $y \in \mathbb{R}$. This leads to the minimax optimization problem
\begin{align}\label{eq:lordan}
\min_T L_{\text{WADD}}(T) \quad \text{subject to} \quad \E_0[T] \geq h.
\end{align}

We are also interested in the version of minimax metric introduced in Pollak  \cite{pollak1985optimal}, the worst conditional averaged detection delay (CADD):
\begin{align*}
L_{\text{CADD}}(T) = \sup_{\nu\geq1} \E_\nu[T - \nu | T \geq \nu].
\end{align*}

The optimization problem becomes 
\begin{align*}
\min_T L_{\text{CADD}}(T) \quad \text{subject to} \quad \E_0[T] \geq h.
\end{align*}

\subsection{The Likelihood Ratio-based CUSUM Algorithm}

Given the data stream $\{\vX_n\}_{n\geq1}$, the stopping rule of the likelihood ratio-based CUSUM algorithm is defined by
\begin{equation*}
\resizebox{1\hsize}{!}{$
\begin{aligned}
T_{\text{CUSUM}} = \inf\left\{ n \geq 1 : \max_{1\leq k \leq n} \sum_{i=k}^{n} \log \frac{\vP_1(\vX_i)}{\vP_0(\vX_i)} \geq \log h \right\},
\end{aligned}$}
\end{equation*}
where the infimum of the empty set is defined to be $+\infty$, and $\log h > 0$ is referred to as the stopping threshold. The value of this threshold is clearly related to the trade-off between detection delay and false alarms. It is known  \cite{page1955test} that $T_{\text{CUSUM}}$ can be written as
\begin{align*}
T_{\text{CUSUM}} = \inf\{ n \geq 1 : \Lambda(n) \geq \log h \},
\end{align*}
where $\Lambda(n)$ is defined using the recursion
\begin{align*}
\Lambda(0) = 0, \quad \Lambda(n) = \left(\Lambda(n - 1) + \log \frac{\vP_1(\vX_n)}{\vP_0(\vX_n)} \right)^{+}.
\end{align*}
In \cite{moustakides1986optimal}, it is demonstrated that the CUSUM algorithm achieves exact optimality for Lorden’s problem, under every fixed constraint $h$ (provided the threshold in the CUSUM algorithm is chosen so that the false alarm constraint is met with equality). Additionally, as noted in  \cite{lai1998information}, the algorithm is asymptotically optimal for Pollak’s problem. The asymptotic performance of the CUSUM algorithm is further examined in  \cite{lorden1971procedures} and  \cite{lai1998information}. Specifically, it is established that
\begin{equation}\label{eq:perf}
 \resizebox{1\hsize}{!}{$
\begin{aligned}
L_{\text{WADD}}(T_{\text{CUSUM}}) \sim L_{\text{CADD}}(T_{\text{CUSUM}}) \sim \frac{\log h}  {\mathbb{D}_{\text{KL}}(\vP_1 , \vP_0)}, 
\end{aligned}
$}
\nonumber
\end{equation}

 as $\gamma \rightarrow \infty$. Here, $\mathbb{D}_{\text{KL}}(\vP_1 , \vP_0)$ denotes the Kullback-Leibler divergence between the post-change distribution $\vP_1$ and the pre-change distribution $\vP_0$:
\begin{align*}
\mathbb{D}_{\text{KL}}(\vP_1 , \vP_0)= \int_\vX \vP_1(\vX) \log \frac{\vP_1(\vX)}{\vP_0(\vX)} \, \vdx.
\end{align*}
Furthermore, the notation $g(c) \sim h(c)$ as $c \rightarrow c_0$ signifies that $\frac{g(c)}{h(c)} \rightarrow 1$ as $c \rightarrow c_0$ for any two functions $c \rightarrow g(c)$ and $c \rightarrow h(c)$.

\subsection{Unnormalized Statistical Models}
In applications where obtaining access to $\vP_1, \vP_0$ is not possible, the CUSUM algorithm cannot be used. Specifically, the focus is on the case where we only have access to the unnormalized versions of the distributions, $\tilde{\vP}_1(\vX)$ and $\tilde{\vP}_0(\vX)$, where $\vP_1(\vX) = \frac{\tilde{\vP}_1(\vX)}{Z_1}$ and $\vP_0(\vX) = \frac{\tilde{\vP}_0(\vX)}{Z_0}$.

An alternative method proposed by Wu et al.~\cite{wu2023quickest} is the Score-based CUSUM (SCUSUM). SCUSUM is a novel variant of the CUSUM algorithm that replaces the log-likelihood with the Hyv\"arinen score and provides a delay guarantee in terms of the Fisher divergence. While SCUSUM is efficient when computing the normalizing constants $Z_1, Z_0$ is not possible, in many cases, we can calculate an estimator of the normalizing constants, and we will show how the use of such an estimator can help us achieve performance close to the CUSUM algorithm.

\section{PROPOSED ALGORITHM}
\begin{algorithm}[t]

\caption{LPA-CUSUM Algorithm}
\label{alg:LPA_CUSUM}
\begin{algorithmic}[1]
\State Initialize $Z(0) \gets 0$
\For{$n = 1, 2, \dots$}
   \State Compute $\log\frac{\tilde{\vP}_1(\vX_n)}{\tilde{\vP}_0(\vX_n)}$
   \State Obtain $\vY_{1,n}, \vY_{2,n}, \dots, \vY_{i,n}$ from oracle $\mathcal{A}$
   \State Compute ${\vT}_{i,n} = \frac{\vY_{1,n} + \vY_{2,n} + \dots + \vY_{i,n}}{i}$
   \State Update $$Z(n) = \left(Z(n - 1) + \gamma\log\frac{\tilde{\vP}_1(\vX_n)}{\tilde{\vP}_0(\vX_n)} + \gamma{\vT}_{i,n}\right)^+$$
   \If{$Z(n) \geq \log h$}

   \State  Let $\tau= n$.
       \State Stop and declare change.
       
   \EndIf
\EndFor
\end{algorithmic}
\end{algorithm}

Now, we will consider the case where we have two unnormalized distributions $\tilde{\vP}_1(\vX)$ and $\tilde{\vP}_0(\vX)$ where $\vP_1(\vX) = \frac{\tilde{\vP}_1(\vX)}{Z_1}$ and $\vP_0(\vX) = \frac{\tilde{\vP}_0(\vX)}{Z_0}$, and we don't have access to $Z_1$ and $Z_0$. But, we have an oracle $\mathcal{A}$ (see section \ref{subsec:TI} for more details about oracle $\mathcal{A}$) that can generate an unbiased estimator of $\log\frac{Z_0}{Z_1}$ with variance $\sigma^2$.
 We want to show that using this oracle, we can run the Algorithm \ref{alg:LPA_CUSUM} that approximately behaves similar to CUSUM. Specifically, we want to use the Algorithm \ref{alg:LPA_CUSUM}
which uses ${\vT}_{i,n}$, defined as
\begin{align}
{\vT}_{i,n} := \frac{\vY_{1,n} + \vY_{2,n} + \dots + \vY_{i,n}}{i},
\end{align}
Where $\vY_{k,n}$ are independent and identically distributed outputs of the oracle $\mathcal{A}$ for all $k\geq 0$, and $n\geq 1$. These outputs are unbiased estimators of $\log \frac{Z_0}{Z_1}$ with variance $\sigma^2$. $\vY_{k,n}$ are independent of $\vX_j$ for all $k,j\geq 0$, and $n\geq 1$. The update rule in Algorithm \ref{alg:LPA_CUSUM} is based on the variable $ Z(n) $, which is defined recursively as
\begin{align*}
Z(n) := \left(Z(n - 1) + \gamma \log \frac{\tilde{\vP}_1(\vX_n)}{\tilde{\vP}_0(\vX_n)} + \gamma \vT_{i,n}\right)^+.
\end{align*}
We also define the stopping point of the algorithm as
\begin{align*}
\tau := \inf \{n \geq 1 : Z(n) \geq \log h\}.
\end{align*}

\section{THEORETICAL  ANALYSIS}
In this section, we analyze the performance of the delay and false alarm rate for the algorithm \ref{alg:LPA_CUSUM}. 

\subsection{Controlling the False Alarm}
In this section, we will focus on analyzing the false alarm rate. We will begin by presenting the preliminary lemmas that will be used in the proof of the Theorem \ref{thm:1}. Lemma \ref{lemma:supermartingle} establishes an important inequality concerning the maximum value attained by a supermartingale process. A supermartingale is a stochastic process that, on average, decreases over time, $\E[\vX_{n+1}\mid \vX_1,\ldots,\vX_n] \leq \vX_n$. In this context, $\vX_k$ represents such a process at time $k$. The lemma bounds the probability that the maximum value of this process up to time $n$ exceeds a certain threshold $c$.

\begin{lemma}[ \cite{chow1971great}]\label{lemma:supermartingle}
Let $\{\vX_k\}_{k \geq 1}$ be a non-negative supermartingale process defined on the probability space $(\Omega, \mathcal{F}, \mathbb P)$. Then, we have
$$
\mathbb{P}\left(\max_{1 \leq k \leq n} \vX_k \geq c\right) \leq \frac{1}{c} \E[\vX_1],
$$
for all $n \geq 1$ and any real constant $c > 0$.
\end{lemma}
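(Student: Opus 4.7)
The plan is to prove this via the standard stopping-time argument underlying Doob's maximal inequality, adapted to the supermartingale (rather than martingale) setting by exploiting non-negativity on the complement event.

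First I would introduce the first-passage time $\tau := \inf\{k \geq 1 : \vX_k \geq c\} \wedge n$, which is a stopping time with respect to the natural filtration and is bounded by $n$. Let $A := \{\max_{1 \leq k \leq n} \vX_k \geq c\}$. The key observation is the set identity: on $A$, the infimum is attained at some $k \leq n$, so $\vX_\tau \geq c$; on $A^c$, we simply have $\tau = n$ and $\vX_\tau = \vX_n \geq 0$ by the non-negativity hypothesis.

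Next I would apply the optional stopping theorem for supermartingales with a bounded stopping time, which yields $\E[\vX_\tau] \leq \E[\vX_1]$. Combining this with the pointwise lower bound
\begin{align*}
\vX_\tau \;\geq\; c \cdot \mathbf{1}_A + 0 \cdot \mathbf{1}_{A^c},
\end{align*}
and taking expectations gives $c \cdot \mathbb{P}(A) \leq \E[\vX_\tau] \leq \E[\vX_1]$, from which the claimed inequality follows upon dividing by $c$.

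The only delicate point is ensuring the applicability of optional stopping, but since $\tau \leq n$ is bounded and $\{\vX_k\}$ is an adapted supermartingale, this is immediate and no uniform integrability argument is required. Since the result is a classical inequality cited directly from Chow, Robbins, and Siegmund (1971), I would expect the paper either to omit the proof entirely or to reproduce essentially the three lines above; the only genuine content is the use of non-negativity to discard the $A^c$ contribution in the lower bound for $\E[\vX_\tau]$.
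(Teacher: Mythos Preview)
Your argument is correct and is exactly the standard Doob-type stopping argument one would expect; there is no gap. As you anticipated, the paper does not supply its own proof of this lemma at all---it simply states the inequality and attributes it to Chow, Robbins, and Siegmund (1971)---so there is nothing to compare against beyond noting that your three-line proof is the classical one underlying that reference.
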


Let us define $\vY$ as a random variable representing the unbiased estimators of $\log \frac{Z_0}{Z_1}$ with variance $\sigma^2$, obtained from oracle $\mathcal{A}$. The following lemmas hold.

\begin{lemma}\label{lemma:tn_y} For every $\gamma > 0$, Jensen's Inequality yields the following inequality for random variables ${\vT}_{i,n}$ and $\vY$:
    $$\E_{0}[\exp(\gamma {\vT}_{i,n})]\leq \E_{0}[\exp(\gamma \vY)] .$$

    \end{lemma}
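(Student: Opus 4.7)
\bigskip
\noindent\textbf{Proof plan.} The claim is a one-step application of Jensen's inequality to the convex function $x\mapsto \exp(\gamma x)$ (for any $\gamma>0$, this function is convex on $\mathbb{R}$), combined with the fact that $\vY_{1,n},\ldots,\vY_{i,n}$ are i.i.d.\ copies of $\vY$.

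First I would write out ${\vT}_{i,n}$ explicitly and push the constant $\gamma$ inside the average, so that
\begin{equation*}
\exp(\gamma {\vT}_{i,n}) \;=\; \exp\!\left(\frac{1}{i}\sum_{k=1}^{i}\gamma \vY_{k,n}\right).
\end{equation*}
Next, since $\exp(\cdot)$ is convex, Jensen's inequality applied pointwise (i.e., to the uniform convex combination with weights $1/i$) gives
\begin{equation*}
\exp\!\left(\frac{1}{i}\sum_{k=1}^{i}\gamma \vY_{k,n}\right) \;\leq\; \frac{1}{i}\sum_{k=1}^{i}\exp(\gamma \vY_{k,n}).
\end{equation*}

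Finally I would take expectations under $\E_{0}$ on both sides and use linearity together with the fact that the $\vY_{k,n}$ are i.i.d.\ with the same distribution as $\vY$, so that $\E_{0}[\exp(\gamma \vY_{k,n})]=\E_{0}[\exp(\gamma \vY)]$ for every $k$. Summing then dividing by $i$ collapses the right-hand side to $\E_{0}[\exp(\gamma \vY)]$, yielding
\begin{equation*}
\E_{0}[\exp(\gamma {\vT}_{i,n})] \;\leq\; \E_{0}[\exp(\gamma \vY)].
\end{equation*}

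There is no real obstacle here: the only thing worth noting is that Jensen's inequality is being applied in the elementary finite-sample form (convexity of $\exp$ applied to a uniform convex combination), not the integral form, and no independence assumption is actually needed for the inequality itself—independence only enters when we want to identify the right-hand side with the moment generating function of a single oracle output. One should also implicitly assume that $\E_{0}[\exp(\gamma \vY)]$ is finite; otherwise the bound is vacuous.
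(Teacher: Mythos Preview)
Your argument is correct, but it is not the route the paper takes. The paper first uses \emph{independence} to factor the moment generating function,
\[
\E_{0}[\exp(\gamma \vT_{i,n})]=\prod_{k=1}^{i}\E_{0}\!\left[\exp\!\Big(\tfrac{\gamma}{i}\vY_{k,n}\Big)\right]=\Big(\E_{0}[\exp(\tfrac{\gamma}{i}\vY)]\Big)^{i},
\]
and only then invokes Jensen (effectively the inequality $(\E_0[W])^{i}\le \E_0[W^{i}]$ with $W=\exp(\tfrac{\gamma}{i}\vY)$) to conclude. You instead apply Jensen \emph{pointwise} to the convex map $x\mapsto e^{\gamma x}$ on the finite average $\tfrac{1}{i}\sum_k \vY_{k,n}$ and then take expectations. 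Your route is slightly more elementary and, as you observe, does not actually use independence---identical distribution of the $\vY_{k,n}$ suffices---whereas the paper's factorization genuinely needs independence. Both are valid; yours is marginally more general.
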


The lemma above establishes an important relationship between the expectations of exponential functions of ${\vT}_{i,n}$ and $\vY$. Essentially, it shows that the expected value of the exponential of ${\vT}_{i,n}$ is bounded above by the exponential of the expected value of $\vY$, for any positive $\gamma$.

\begin{lemma}
\label{lem:existanceof good gamma}There exists $\gamma > 0$ such that the following inequality holds for random variable $\vY$ and for all $n\geq 1$:
    \begin{align}\label{eq:samegamma1}
    \E_{0}[\exp(\gamma\log\frac{\tilde \vP_1(\vX_n)}{\tilde \vP_0(\vX_n)}+\gamma \vY)] \leq 1.\end{align}
    
    Then, for the same $\gamma$, we have for all $n\geq 1$:
    \begin{align}
      \E_{0}[\exp(\gamma\log\frac{\tilde \vP_1(\vX_n)}{\tilde \vP_0(\vX_n)}+\gamma{\vT}_{i,n})] \leq 1.
    \end{align}

\end{lemma}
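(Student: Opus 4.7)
The plan is to first establish the first claim, then derive the second by combining independence of the data stream and oracle with Lemma~\ref{lemma:tn_y}.

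For the first claim, I would define
\[
W_n := \log\frac{\tilde{\vP}_1(\vX_n)}{\tilde{\vP}_0(\vX_n)} + \vY,
\]
and study the moment generating function $\phi(\gamma) := \E_0[\exp(\gamma W_n)]$. First I would compute the mean. Since $\vY$ is independent of $\vX_n$ with $\E_0[\vY] = \log(Z_0/Z_1)$, substituting $\tilde{\vP}_j = Z_j \vP_j$ yields
\[
\E_0[W_n] \;=\; \E_0\!\left[\log\frac{\vP_1(\vX_n)}{\vP_0(\vX_n)}\right] \;=\; -\mathbb{D}_{\text{KL}}(\vP_0, \vP_1) \;<\; 0,
\]
in the nontrivial setting $\vP_0 \neq \vP_1$. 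Combined with $\phi(0)=1$ and the strictly negative right-derivative $\phi'(0) = \E_0[W_n] < 0$, continuity forces $\phi(\gamma) \leq 1$ for all sufficiently small $\gamma > 0$; I would pick any such $\gamma$. Note that $W_n$ is identically distributed across $n$ under $\vQ_0$, so this $\gamma$ is uniform in $n$.

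For the second claim, I would exploit that the oracle samples $\vY_{1,n},\dots,\vY_{i,n}$ are independent of $\vX_n$, hence so is $\vT_{i,n}$. This lets me factor
\begin{align*}
&\E_{0}\!\left[\exp\!\left(\gamma\log\frac{\tilde{\vP}_1(\vX_n)}{\tilde{\vP}_0(\vX_n)} + \gamma \vT_{i,n}\right)\right]\\
&\quad = \E_{0}\!\left[\exp\!\left(\gamma\log\frac{\tilde{\vP}_1(\vX_n)}{\tilde{\vP}_0(\vX_n)}\right)\right]\E_0[\exp(\gamma \vT_{i,n})].
\end{align*}
Bounding the second factor by $\E_0[\exp(\gamma \vY)]$ via Lemma~\ref{lemma:tn_y}, and then re-merging the product via independence of $\vX_n$ and $\vY$, recovers $\phi(\gamma)$, which is at most $1$ by the first claim.

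The main obstacle is making the derivative-at-zero step rigorous: it implicitly requires $\phi$ to be finite on a neighborhood of $0$ so that differentiation under the expectation is justified. This is a standard light-tail condition in CUSUM-style exponential-tilting analyses, and I would either invoke it as a background assumption on $\vY$ and the log-density ratio, or replace the derivative argument by a direct second-order expansion $\exp(\gamma w) \leq 1 + \gamma w + c(\gamma)\, w^2$ valid for small $\gamma$, which gives the same $\phi(\gamma)\leq 1$ conclusion provided $\E_0[W_n^2]$ is finite.
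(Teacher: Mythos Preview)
Your proposal is correct and matches the paper's approach for the second claim exactly: factor via independence of $\vX_n$ and $\vT_{i,n}$, apply Lemma~\ref{lemma:tn_y}, then re-merge using independence of $\vX_n$ and $\vY$. For the first claim, the paper simply cites an external result (Lemma~2 of \cite{wu2023quickest}) for existence of such a $\gamma$, whereas you supply the standard self-contained argument via $\phi(0)=1$, $\phi'(0)=-\mathbb{D}_{\text{KL}}(\vP_0,\vP_1)<0$; your version is more explicit but relies on the same idea, and your caveat about needing $\phi$ finite near $0$ is exactly the implicit regularity assumption underlying both arguments.
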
     

The lemma implies that in order to find a suitable $\gamma$ that satisfies $\E_{0}[\exp(\gamma\log\frac{\tilde \vP_1(\vX)}{\tilde \vP_0(\vX)}+\gamma {\vT}_{i,n})] \leq 1,$ we can only select a $\gamma$ such that  $\E_{0}[\exp(\gamma\log\frac{\tilde \vP_1(\vX)}{\tilde \vP_0(\vX)}+\gamma \vY)] \leq 1.$

The following theorem proves a crucial property concerning the expected false alarm for the Algorithm \ref{alg:LPA_CUSUM}.

\begin{theorem}\label{thm:1}
Suppose we select a parameter $\gamma$ such that for all $n\geq 1$, \begin{align}\label{eq:goodgamma}
   \E_{0}[\exp(\gamma\log\frac{\tilde \vP_1(\vX_n)}{\tilde \vP_0(\vX_n)}+\gamma{\vT}_{i,n})] \leq 1. 
\end{align} Then, for all thresholds $h > 0$, we have $\E_{0}[\tau] \geq e^h$.
\end{theorem}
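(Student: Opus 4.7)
The plan is to reduce Theorem \ref{thm:1} to a classical regeneration plus supermartingale argument for CUSUM. Introduce the per-step increment
$$U_n := \gamma \log \frac{\tilde{\vP}_1(\vX_n)}{\tilde{\vP}_0(\vX_n)} + \gamma \vT_{i,n},$$
so that $Z(n) = (Z(n-1) + U_n)^+$. Because the $\vX_n$ and the oracle outputs $\vY_{k,n}$ are mutually independent across $n$ by the paper's standing assumptions, $\{U_n\}_{n \geq 1}$ is an i.i.d.~sequence under $\vP_0$. Hypothesis \eqref{eq:goodgamma} is exactly $\E_0[e^{U_n}] \leq 1$, so with $S_n := \sum_{k=1}^n U_k$ (and $S_0 := 0$), the process $M_n := e^{S_n}$ is a nonnegative $\vP_0$-supermartingale with $M_0 = 1$.

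Next, I would unfold the recursion to obtain the standard CUSUM identity
$$Z(n) \;=\; S_n - \min_{0 \leq k \leq n} S_k \;=\; \max_{0 \leq k \leq n}(S_n - S_k),$$
which follows by a one-step induction from $Z(0) = 0$ and $Z(n) = (Z(n-1)+U_n)^+$. In particular, $\tau$ is the first time some backward gap $S_n - S_k$ crosses the stopping level of Algorithm \ref{alg:LPA_CUSUM}.

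The decisive step is a cycle decomposition of $\tau$. Define return times $\sigma_0 := 0$ and $\sigma_j := \inf\{n > \sigma_{j-1}: Z(n) = 0\}$. Because $\{Z(n)=0\}$ is determined by $U_1, \ldots, U_n$ and the $U_k$'s are i.i.d., the excursions $(Z(\sigma_{j-1}+m))_{0 \leq m \leq \sigma_j - \sigma_{j-1}}$ are i.i.d.~copies of an excursion starting from $0$. Inside any one cycle the running minimum of $S$ is frozen at the cycle's start, so $Z$ evolves there as a random walk starting at $0$; detection in that cycle therefore occurs only if the walk reaches the stopping level before returning to $0$. Applying Doob's maximal inequality (Lemma \ref{lemma:supermartingle}) to $M_n$ bounds the per-cycle detection probability by $p \leq e^{-(\text{stopping level})}$. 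The number $N$ of cycles until detection is then geometric with $\E_0[N] = 1/p$, and since every cycle has length at least $1$, $\tau \geq N$ yields $\E_0[\tau] \geq 1/p$, which is the lower bound claimed in the theorem.

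The main obstacle I anticipate is making the regeneration step fully rigorous: one has to justify both that the cycles are genuinely i.i.d.~(via the strong Markov property for $S_n$) and that the "first-passage-before-return" probability inside a cycle is controlled by the unconditional tail $\vP_0(\sup_m S_m \geq \text{level})$ by monotonicity of events. A cleaner but strictly weaker alternative would be to apply optional stopping to $M_{\tau \wedge n}$ and send $n \to \infty$; that route yields only a bound on $\vP_0(\tau < \infty)$ and loses the geometric-renewal factor that delivers the full bound on $\E_0[\tau]$.
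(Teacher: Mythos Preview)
Your proposal is correct and follows essentially the same route as the paper's proof: a regeneration decomposition at the times the CUSUM statistic returns to zero, Lemma~\ref{lemma:supermartingle} applied to the exponentiated partial sums to bound the per-cycle crossing probability by $e^{-h}$, and a geometric count of cycles giving $\E_0[\tau]\ge e^h$. The paper's return times $\eta_\ell$ (first time the post-restart partial sum goes negative) coincide with your $\sigma_j$ (first time $Z$ hits zero), so the two arguments are the same up to notation.
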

$\E_{0}\left[\tau\right]$ is referred to as the Average Run Length (ARL). As the stopping threshold increases ARL increases
at least exponentially. The proof of this Theorem can be found in the supplementary materials. 

\subsection{Delay Analysis}

In this section, we present the analysis of detection delays for the Algorithm \ref{alg:LPA_CUSUM}.

\begin{theorem}\label{thm:delay}
    For the delay of Algorithm \ref{alg:LPA_CUSUM}, we have that,
    
\begin{align}
L_{\text{WADD}}(\tau) \sim L_{\text{CADD}}(\tau) \sim \frac{\log h}  {\gamma\mathbb{D}_{\text{KL}}(\vP_1 , \vP_0)}. 
\end{align} 
\end{theorem}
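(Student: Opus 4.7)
The plan is to exploit the fact that under the post-change measure the per-step increment of the LPA-CUSUM statistic has exactly the right mean, so that $Z(n)$ behaves like a scaled classical CUSUM random walk. Writing
\[
\xi_n \;:=\; \gamma \log\frac{\tilde\vP_1(\vX_n)}{\tilde\vP_0(\vX_n)} + \gamma\, \vT_{i,n}
\]
and using the decomposition $\log(\tilde\vP_1/\tilde\vP_0)=\log(\vP_1/\vP_0)-\log(Z_0/Z_1)$ together with $\E[\vT_{i,n}]=\log(Z_0/Z_1)$ and the independence of $\vT_{i,n}$ from $\vX_n$, one obtains
\[
\E_1[\xi_n] \;=\; \gamma\,\mathbb{D}_{\text{KL}}(\vP_1,\vP_0) \;=:\; \mu \;>\; 0.
\]
Thus, started from $Z(0)=0$ and fed i.i.d.\ $\vP_1$-data, the reflected sum $Z(n)=(Z(n-1)+\xi_n)^+$ drifts upward at rate $\mu$ per step, and the delay analysis reduces to first-passage analysis of this random walk over the threshold $\log h$.

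Second, I would invoke the classical Lorden-type pathwise argument for CUSUM-style rules. Denote by $\tau_0$ the version of $\tau$ started from $Z(0)=0$ with all data i.i.d.\ from $\vP_1$. Because $Z(n)\ge 0$ by construction of the recursion, conditioning on $\mathcal{F}_{\nu-1}$ yields a starting value $Z(\nu-1)\ge 0$ at the change point; since a larger start only decreases the subsequent hitting time of $\log h$, the essential supremum in the definition of $L_{\text{WADD}}$ is attained at $Z(\nu-1)=0$, giving $L_{\text{WADD}}(\tau)\le \E_1[\tau_0]$. The asymptotic value of $\E_1[\tau_0]$ then follows from nonlinear renewal theory: the SLLN gives $\tau_0/\log h \to 1/\mu$ almost surely under $\vP_1$, and the $L^1$ upgrade
\[
\E_1[\tau_0] \;\sim\; \frac{\log h}{\mu} \;=\; \frac{\log h}{\gamma\,\mathbb{D}_{\text{KL}}(\vP_1,\vP_0)}
\]
is obtained from a bounded-overshoot / uniform integrability argument in the spirit of Chow--Robbins or the overshoot lemmas of \cite{lai1998information}. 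This is the principal technical obstacle: one must rule out that $\tau_0/\log h$ has too-heavy an upper tail, which uses finite (exponential) moments of $\log(\tilde\vP_1/\tilde\vP_0)$ together with the bounded variance $\sigma^2/i$ of the oracle averages $\vT_{i,n}$.

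Third, the matching lower bound for $L_{\text{CADD}}$ is essentially free: plugging $\nu=1$ into its definition gives $L_{\text{CADD}}(\tau)\ge \E_1[\tau\mid\tau\ge 1]-1 = \E_1[\tau_0]-1$, while trivially $L_{\text{CADD}}(\tau)\le L_{\text{WADD}}(\tau)\le \E_1[\tau_0]$. Hence both delay metrics are sandwiched between $\E_1[\tau_0]-1$ and $\E_1[\tau_0]$, and by the previous paragraph each is asymptotic to $\log h/(\gamma\,\mathbb{D}_{\text{KL}}(\vP_1,\vP_0))$. In summary, once the drift identity $\E_1[\xi_n]=\gamma\,\mathbb{D}_{\text{KL}}(\vP_1,\vP_0)$ is in place, the argument is the standard CUSUM delay analysis of \cite{lorden1971procedures,lai1998information} transported to the scaled, oracle-perturbed increments $\xi_n$; the only genuinely new ingredient is verifying that the extra noise $\vT_{i,n}$ does not spoil the positive-drift / overshoot structure, which it does not, thanks to the independence and bounded-variance assumption on the oracle.
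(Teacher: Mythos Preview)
Your approach is correct but differs from the paper's in a meaningful way. The paper follows the blocking argument of Tartakovsky (Theorem~8.2.6 in \cite{tartakovsky2014sequential}): it first establishes a weak law of large numbers,
\[
\lim_{k\to\infty}\vQ_\nu\!\left(\left|k^{-1}Z_m^{m+k-1}-\gamma\,\mathbb{D}_{\text{KL}}(\vP_1,\vP_0)\right|\ge\delta\right)=0,
\]
and then sets $k_c=\lfloor\log h/(\gamma\,\mathbb{D}_{\text{KL}}(1-\delta))\rfloor$ so that on each successive block of length $k_c$ the statistic fails to cross $\log h$ with probability at most $\delta$. Independence of blocks gives the geometric tail $\vQ_\nu((\tau-\nu+1)^+>tk_c)\le\delta^t$, and summing yields the upper bound on $L_{\text{WADD}}$. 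The paper in fact only writes out the upper bound explicitly.

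You instead reduce to the first-passage time $\tau_0$ of the reflected walk under $\vP_1$, use monotonicity in the starting value to get $L_{\text{WADD}}(\tau)\le\E_1[\tau_0]$, and then appeal to renewal theory / SLLN plus uniform integrability for $\E_1[\tau_0]\sim\log h/\mu$. Your sandwich $L_{\text{CADD}}(\tau)\ge\E_1[\tau_0]-1$ is a nice touch that makes the two-sided asymptotic explicit, which the paper's proof does not. The trade-off is that the paper's blocking argument needs only the weak law and never confronts the overshoot/uniform-integrability issue you flag as ``the principal technical obstacle''; your route is more classical (closer to \cite{lorden1971procedures}) but requires that extra $L^1$-convergence step. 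Both arrive at the same result, and both hinge on the same drift identity $\E_1[\xi_n]=\gamma\,\mathbb{D}_{\text{KL}}(\vP_1,\vP_0)$, which you have correctly derived.
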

Theorem~\ref{thm:delay} states that the delay performance of the Algorithm \ref{alg:LPA_CUSUM} is close to CUSUM up to a factor $\gamma$. The next section will show how to choose $\gamma$ properly. The proof of this Theorem is in the supplementary materials. 
\subsection{Number of Samples for LPA Estimator}
In this section, we drive an identity about choosing the proper $\gamma$.
\begin{theorem}\label{thm:asy_opt}
If we let $\gamma = \gamma_0=1 - \frac{\sigma^2+2\epsilon}{2i \mathbb{D}_{\text{KL}}(\vP_1, \vP_0)} $, we know that for any $\epsilon>0$, for a large enough $i$, we have
\begin{align}
\E_0\left[\exp\left(\gamma_0\log\frac{\vP_1(\vX)}{\vP_0(\vX)} + \gamma_0{\vT}_{i,n} -
\gamma_0\log\frac{Z_0}{Z_1}\right)\right] \leq 1,
\end{align}
and the expected delay for $\gamma_0$ is
\begin{align}\label{eq:choiceofgamma2}
L_{\text{CADD}} \sim \frac{\log h}{\mathbb{D}_{\text{KL}}(\vP_1, \vP_0) - \frac{\sigma^2+2\epsilon}{2i}}.
\end{align}
\end{theorem}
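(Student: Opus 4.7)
The plan is to factor the target expectation using independence, do a second-order Taylor expansion of each factor, and then read off the delay from Theorem \ref{thm:delay}. Because $\vX_n$ is independent of the oracle outputs $\vY_{1,n},\ldots,\vY_{i,n}$, the left-hand side splits as $\phi(\gamma_0)\psi(\gamma_0)$, where
$$\phi(\gamma) := \E_0\!\left[e^{\gamma \log(\vP_1(\vX)/\vP_0(\vX))}\right], \qquad \psi(\gamma) := \E\!\left[e^{\gamma(\vT_{i,n}-\log(Z_0/Z_1))}\right].$$
For $\phi$, differentiating under the integral yields $\phi(1)=1$ and $\phi'(1) = \int \vP_1(\vX)\log(\vP_1(\vX)/\vP_0(\vX))\,\vdx = \mathbb{D}_{\text{KL}}(\vP_1,\vP_0)$, so Taylor-expanding at $\gamma=1$ and using $(\gamma_0-1)\,\mathbb{D}_{\text{KL}}(\vP_1,\vP_0) = -(\sigma^2+2\epsilon)/(2i)$ gives
$$\phi(\gamma_0) = 1 - \frac{\sigma^2+2\epsilon}{2i} + O(1/i^2).$$

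For $\psi$, let $U_i := \vT_{i,n}-\log(Z_0/Z_1)$; as a mean of $i$ i.i.d.\ centered oracle outputs it satisfies $\E[U_i]=0$, $\E[U_i^2]=\sigma^2/i$, and $\E[U_i^k]=O(i^{-\lceil k/2\rceil})$ for $k\ge 3$ (by standard combinatorics of sample-mean moments). Expanding the exponential and using $\gamma_0^2 = 1+O(1/i)$,
$$\psi(\gamma_0) = 1 + \frac{\gamma_0^2\sigma^2}{2i} + O(1/i^2) = 1 + \frac{\sigma^2}{2i} + O(1/i^2).$$
Multiplying the two expansions gives $\phi(\gamma_0)\psi(\gamma_0)=1-\epsilon/i+O(1/i^2)$, which is strictly below $1$ for all sufficiently large $i$, establishing the first claim. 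The delay statement then follows directly from Theorem \ref{thm:delay}: substituting
$$\gamma_0\,\mathbb{D}_{\text{KL}}(\vP_1,\vP_0) = \mathbb{D}_{\text{KL}}(\vP_1,\vP_0) - \frac{\sigma^2+2\epsilon}{2i}$$
into $L_{\text{CADD}}(\tau) \sim \log h\,/\,(\gamma\,\mathbb{D}_{\text{KL}}(\vP_1,\vP_0))$ yields exactly \eqref{eq:choiceofgamma2}.

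The main technical obstacle is rigorously controlling the remainder terms in the two Taylor expansions. For $\phi$ to be $C^2$ near $\gamma=1$, one needs a Renyi-type integrability of $(\vP_1/\vP_0)^\gamma$ against $\vP_0$ in a neighborhood of $\gamma=1$; for $\psi$, one needs the moment generating function of the oracle output $\vY$ to exist in a neighborhood of $0$ (i.e.\ sub-exponential tails), so that the tail of the exponential series is $O(1/i^2)$ uniformly in $i$. Under such standard regularity and moment hypotheses, the $O(1/i^2)$ remainders are genuine and uniform, and the argument closes; a fully rigorous proof would either state these conditions as explicit assumptions or derive them from hypotheses already in force for Theorems \ref{thm:1} and \ref{thm:delay}.
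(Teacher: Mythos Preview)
Your approach is correct and rests on the same Taylor expansion as the paper, but organizes it differently. The paper introduces a two-variable function $F(\gamma,\beta)=\phi(\gamma)\,(M_{\vY}(\gamma\beta))^{1/\beta}$ with $\beta$ playing the role of $1/i$, takes a first-order two-variable expansion at $(1,0)$ (obtaining $F(1,0)=1$, $\partial_\gamma F(1,0)=\mathbb{D}_{\text{KL}}(\vP_1,\vP_0)$, $\partial_\beta F(1,0)=\sigma^2/2$), and then, rather than invoking an explicit $O(1/i^2)$ remainder, parametrizes the curve $\beta_0(\gamma)=2(1-\gamma)\,\mathbb{D}_{\text{KL}}(\vP_1,\vP_0)/(\sigma^2+2\epsilon)$ through $(1,0)$ and shows that $h(\gamma):=F(\gamma,\beta_0(\gamma))$ has positive derivative near $\gamma=1$, whence $h(\gamma_0)\le h(1)=1$. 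Your route---expand $\phi$ and $\psi$ separately to second order, multiply, and read off $1-\epsilon/i+O(1/i^2)$---is more direct and avoids the auxiliary monotonicity step, at the price of having to control the higher-moment tail of $\psi$ explicitly (and you correctly flag the needed MGF/sub-exponential assumption on $\vY$, as well as the R\'enyi-type integrability for $\phi$). Both arguments rely on exactly the same three ingredients ($\phi(1)=1$, $\phi'(1)=\mathbb{D}_{\text{KL}}$, and the $\sigma^2/(2i)$ variance contribution from $\psi$) and the same regularity, so the difference is organizational rather than conceptual. The delay conclusion via Theorem~\ref{thm:delay} is identical in both.
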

\begin{remark}
    Equation \eqref{eq:choiceofgamma2} provides an insight into the selection of $\gamma$. As the variance of the estimator, $\sigma^2$, decreases, we should choose a larger value of $\gamma$, which results in a lower detection delay. Similarly, if $\mathbb{D}_{\text{KL}}(\vP_1, \vP_0)$ is large, we should opt for a larger value of $\gamma$, which also leads to lower detection delay. Additionally, if the number of samples $i$ increases, we should select a larger value of $\gamma$, which results in a lower detection delay. When $\gamma$ approaches 1 (larger $i$, or smaller $\sigma^2$), Algorithm \ref{alg:LPA_CUSUM} behaves more like CUSUM since the estimator for log portion function becomes more precise. This demonstrates that, with an appropriate choice of $\gamma$, the Algorithm \ref{alg:LPA_CUSUM} asymptotically achieves performance similar to CUSUM in large sample regimes for log partition function estimation. Therefore, Algorithm \ref{alg:LPA_CUSUM} is \textbf{asymptotically optimal}.
\end{remark}
\subsection{Partition Function Estimation}

This section will discuss how to build oracle $\mathcal{A}$ in the Algorithm \ref{alg:LPA_CUSUM}. One approach to build an estimator of the log partition function is leveraging Thermodynamic Integration (TI) to estimate the log-ratio of the partition functions.

Suppose we have two unnormalized distributions $\tilde{\vP}_1(\vX)$ and $\tilde{\vP}_0(\vX)$, where $\vP_1(\vX) = \frac{\tilde{\vP}_1(\vX)}{Z_1}$ and $\vP_0(\vX) = \frac{\tilde{\vP}_0(\vX)}{Z_0}$ are the corresponding normalized distributions, with $Z_1$ and $Z_0$ being the partition functions or normalizing constants. The goal is to estimate the log-ratio $\log(Z_0/Z_1)$ using TI, as direct computation of the partition functions may be intractable for complex, high-dimensional distributions.
\subsubsection{Naive Approaches}
In this section, we discuss two methods for estimating the partition function and demonstrate that these approaches yield biased estimators, making them unsuitable for our setting. This motivates the use of Thermodynamic Integration, which offers a more promising alternative.

The first approach to estimate $Z_1$ and $Z_0$ involves approximating the integral definition:
\begin{align*}
    Z_i = \int \tilde{\vP}_i(\vX) \, d\vX, \quad i \in \{0, 1\}.
\end{align*}
If we denote these approximations by $\tilde{Z}_0$ and $\tilde{Z}_1$, we can use $\log\left(\frac{\tilde{Z}_0}{\tilde{Z}_1}\right)$ as an estimator for $\log\left(\frac{Z_0}{Z_1}\right)$. However, this estimator is biased, and using a biased estimator in Algorithm \ref{alg:LPA_CUSUM} will result in a high false alarm rate and poor detection delay, unless the bias is smaller than the change point, specifically $\mathcal{O}\left(\frac{1}{\nu}\right)$, which requires a large number of samples.

The second method to estimate the normalizing constant uses the identity
\begin{align}\label{eq:naive2}
    \frac{Z_0}{Z_1} = \mathbb{E}_{\vX \sim \vP_1}\left[\frac{\tilde{\vP}_0(\vX)}{\tilde{\vP}_1(\vX)}\right],
\end{align}
and employs the Monte Carlo estimator
\begin{align*}
    \tilde{R} := \frac{1}{n} \sum_{i=1}^{n} \frac{\tilde{\vP}_0(\vX_i)}{\tilde{\vP}_1(\vX_i)},
\end{align*}
where $\vX_i \sim \vP_1$. Again, $\log \tilde{R}$ is a biased estimator of $\log\left(\frac{Z_0}{Z_1}\right)$, and unless a large number of samples is used, the resulting bias leads to poor guarantees.

Neither of these approaches is effective for our setting due to the inherent biases that significantly affect detection performance. Therefore, in the next section, we introduce Thermodynamic Integration (TI), which provides a more efficient solution.
\subsubsection{Thermodynamic Integration (TI)}
\label{subsec:TI}

{TI} is a technique used in physics to approximate intractable normalized constants of high-dimensional distributions. 
It is based on the observation that it is easier to calculate the ratio of two unknown normalizing constants than it is to calculate the constants themselves. More formally, consider two densities over space $\mathcal{X}$:
\begin{align}
  \vP_i(\vX) &= \frac{\tilde{\vP}_i(\vX)}{Z_i}, \quad Z_i = \int_{\mathcal{X}} \tilde{\vP}(\vX) \,\vdx, \, i \in \{0, 1\}.
\end{align}
To apply TI, we form a continuous family (or ``path'') between $\vP_0(\vX)$ and $\vP_1(\vX)$ via a scalar parameter $\beta \in [0,1]$:
\begin{align}
  &\vP_{\beta}(\vX) = \frac{\tilde{\vP}_{\beta}(\vX)}{Z_{\beta}} = \frac{\tilde{\vP}_1(\vX)^{\beta}\tilde{\vP}_0(\vX)^{{1 - \beta}}}{Z_{\beta}},\\ \quad &Z_{\beta} = \int_{\mathcal{X}} \tilde{\vP}_{\beta}(\vX) \,\vdx, \quad \beta \in [0, 1].  \label{eq:pi_z}
\end{align}

The central identity that allows us to compute the ratio $\log(Z_1/Z_0)$ is derived as follows. Assuming we can exchange integration with differentiation:
\begin{align}
  \frac{\partial \log Z_{\beta}}{\partial \beta} &= \frac{1}{Z_{\beta}} \frac{\partial}{\partial \beta}Z_{\beta} \nonumber \\
  &= \frac{1}{Z_{\beta}} \frac{\partial}{\partial \beta} \int \tilde{\vP}_{\beta}(\vX) \,\vdx \nonumber \\
  &= \int \frac{1}{Z_{\beta}} \frac{\partial}{\partial \beta} \tilde{\vP}_{\beta}(\vX) \,\vdx \nonumber \\
  &= \int \frac{\tilde{\vP}_{\beta}(\vX)}{Z_{\beta}}\frac{\partial}{\partial \beta} \log \tilde{\vP}_{\beta}(\vX) \,\vdx\nonumber\\
  &= \int {{\vP}_{\beta}(\vX)}\frac{\partial}{\partial \beta} \log \tilde{\vP}_{\beta}(\vX) \,\vdx,
\end{align}
which directly implies
\begin{align}
  \frac{\partial \log Z_{\beta}}{\partial \beta} &= \E_{\vX\sim\vP_{\beta}}\big[U^{\prime}_{\beta}(\vX)\big], \label{eq:background/thermo/dZdbeta}
\end{align}
where the quantity $U_{\beta}(\vX) = \log \tilde{\vP}_\beta(\vX)$ is referred to as the ``potential'' in statistical physics and $U^{\prime}_{\beta}(\vX) := \frac{\partial}{\partial \beta} U_{\beta}(\vX)$.
\begin{remark}
For exchange of integration and differentiation we assume we have the following conditions:
\begin{enumerate}
    \item $\tilde{\vP}_{\beta}(\vX)$ is a integrable function of $\beta$ for each $\vX \in \mathcal{X}$.
    \item For almost all $\beta \in [0,1]$, the partial derivative $\frac{\partial}{\partial \beta}\tilde{\vP}_{\beta}(\vX)$ exists for all $\vX \in \mathcal{X}$.
    \item There is an integrable function $f: \beta \rightarrow [0,1]$ such that $|\frac{\partial}{\partial \beta}\tilde{\vP}_{\beta}(\vX)| \leq f(\beta)$ for all $x \in X$ and almost every $\beta \in [0,1]$.
\end{enumerate}

\end{remark}
The variable $\beta$ can be interpreted as the inverse temperature parameter. Because one can typically compute $\log \tilde{\vP}_\beta(\vX)$, \eqref{eq:background/thermo/dZdbeta} allows us to exchange the first derivative of something we cannot compute with an expectation over something we can compute. Then, to calculate the ratio $\log(Z_1/Z_0)$ we integrate out $\beta$ on both sides of \eqref{eq:background/thermo/dZdbeta}:
\begin{align}
  \int_0^1 \frac{\partial \log Z_\beta}{\partial \beta} \,d\beta &= \int_0^1 \E_{\vX\sim\vP_{\beta}}\big[U^{\prime}_{\beta}(\vX)\big] \,d\beta,
\end{align}
which, via the Lebesgue differentiation theorem, results in
\begin{align}
  \log(Z_1) - \log(Z_0) &= \int_0^1 \E_{\vX\sim\vP_{\beta}}\big[U^{\prime}_{\beta}(\vX)\big] \,d\beta.
\end{align}
 This means
\begin{align}
  \log(Z_1) - \log(Z_0) &= \E_{{\beta}\sim U([0,1])} \E_{\vX\sim\vP_{\beta}}\big[U^{\prime}_{\beta}(\vX)\big] .
\end{align}
Therefore, $U^{\prime}_{\beta}(\vX)$ is an unbiased estimator of $\log(Z_1) - \log(Z_0)$, when ${\beta}\sim \text{Uniform}([0,1])$ and $\vX|\beta\sim \vP_{\beta}$.
\subsubsection{Computing the Expectations}

In this part, we show how to implement $\E_{\vX\sim\vP_{\beta}}\big[U^{\prime}_{\beta}(\vX)\big]$. Important sampling is necessary for efficient implementation of the expectation in thermodynamic integration  \cite{masrani2023advancing}.
To do that, we use the following identity:
\begin{align}
\E_{\vX\sim\vP_{\beta}}\big[U^{\prime}_{\beta}(\vX)\big] = \E_{\vX\sim\vP_{0}} \left[\frac{w(\vX)^\beta U^{\prime}_{\beta}(\vX)}{\E_{\vX\sim\vP_{0}} [w(\vX)^\beta]}\right],
\end{align}
where $w(\vX) = \frac{\tilde{\vP}_{1}(\vX)}{\tilde{\vP}_{0}(\vX)}$. By substituting $w(\vX)$, we can show that
\begin{align}
\E_{\vX\sim\vP_{0}} [w(\vX)^\beta] = \frac{Z_\beta}{Z_0},
\end{align}
and
\begin{align}
\E_{\vX\sim\vP_{0}} \left[w(\vX)^\beta U^{\prime}_{\beta}(\vX)\right] = \frac{Z_\beta}{Z_0}\E_{\vX\sim\vP_{\beta}}\big[U^{\prime}_{\beta}(\vX)\big].
\end{align}

This will help us in the implementation of the estimator. We sample $K$ samples $\{\vX_1, \vX_2, \dots, \vX_K\} \sim \vP_0$ and use the following identity to compute the expectation:
\begin{align}
\E_{\vX\sim\vP_{\beta}}\big[U^{\prime}_{\beta}(\vX)\big] \approx \sum_{i=1}^{K}\frac{w(\vX_i)^\beta U^{\prime}_{\beta}(\vX_i)}{\sum_{j=1}^{K} w(\vX_j)^\beta}.
\end{align}
Since $w(\vX)$ is the same for different $\beta$, we can reuse the samples $\{\vX_1, \vX_2, \dots, \vX_K\}$ for different $\beta$, reducing the number of samples needed.
\subsubsection{Variance of the Estimator}
In this section, we derive an upper bound for the variance of the estimator $U^{\prime}_{\beta}(\vX)$. 
The key steps in deriving this upper bound are:
\begin{itemize}
    \item Using the law of total variance to express $\Var[U^{\prime}_{\beta}(\vX)]$ in terms of the variance under $\vP_\beta(\vX)$ and the variance of the expectation of $U^{\prime}_{\beta}(\vX)$ under $\vP_\beta(\vX)$.
    \item Recognizing that the variance under $\vP_\beta(\vX)$ is the second derivative of $\log Z_\beta$ with respect to $\beta$, which is an increasing function of $\beta$.
    \item Utilizing the expressions for $\frac{\partial \log Z_\beta}{\partial \beta}$ evaluated at $\beta = 0$ and $\beta = 1$, which involve the KL divergences between $\vP_1$ and $\vP_0$, and the log ratio of the partition functions.
    \item Bounding the expected value of the squared derivative of $\log Z_\beta$ with respect to $\beta$ using the increasing property and the expressions at the endpoints $\beta = 0$ and $\beta = 1$.
\end{itemize}
We will present the preliminary lemmas required to prove the Theorem \ref{thm:4}.

\begin{lemma}\label{lem:4}
The second derivative of the log partition function $\log Z_{\beta}$ with respect to $\beta$ is the variance of $U^{\prime}_{\beta}(\vX)$ under the distribution $\vP_{\beta}(\vX)$, i.e.,
\begin{align}
   \frac{\partial^2 \log Z_{\beta}}{\partial \beta^2} &= \Var_{\vP_{\beta}(\vX)}[U^{\prime}_{\beta}(\vX)] \geq 0.
\end{align}
This implies that $\frac{\partial \log Z_{\beta}}{\partial \beta}$ is an increasing function of $\beta$.
\end{lemma}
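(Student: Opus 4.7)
The plan is to differentiate the identity $\partial \log Z_\beta/\partial \beta = \E_{X\sim P_\beta}[U'_\beta(X)]$ established earlier in the excerpt one more time with respect to $\beta$, and recognize the resulting expression as a second moment minus the square of a first moment, i.e.\ a variance. The key structural fact that makes this clean is that along the geometric path used in TI, $U_\beta(X) = \log \tilde{P}_\beta(X) = \beta \log \tilde{P}_1(X) + (1-\beta)\log \tilde{P}_0(X)$ is affine in $\beta$, so $U'_\beta(X) = \log \tilde{P}_1(X) - \log \tilde{P}_0(X)$ does not depend on $\beta$ and $U''_\beta(X) \equiv 0$.

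First I would write
\begin{align*}
\frac{\partial^2 \log Z_\beta}{\partial \beta^2}
= \frac{\partial}{\partial \beta}\int P_\beta(X) U'_\beta(X)\, d\vX,
\end{align*}
then exchange the derivative and the integral, which is justified under the same dominated-convergence hypotheses listed in the remark preceding the lemma (applied now to $P_\beta(X) U'_\beta(X)$ rather than to $\tilde{P}_\beta(X)$). Inside the integrand I would use the log-derivative trick: since $P_\beta(X) = \tilde{P}_\beta(X)/Z_\beta$,
\begin{align*}
\frac{\partial}{\partial \beta} \log P_\beta(X)
= U'_\beta(X) - \frac{\partial \log Z_\beta}{\partial \beta}
= U'_\beta(X) - \E_{X\sim P_\beta}[U'_\beta(X)],
\end{align*}
so that $\partial_\beta P_\beta(X) = P_\beta(X)\bigl(U'_\beta(X) - \E_{X\sim P_\beta}[U'_\beta(X)]\bigr)$.

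Combining this with $U''_\beta(X) = 0$ from the product rule gives
\begin{align*}
\frac{\partial^2 \log Z_\beta}{\partial \beta^2}
&= \int P_\beta(X)\bigl(U'_\beta(X) - \E_{X\sim P_\beta}[U'_\beta(X)]\bigr) U'_\beta(X)\, d\vX \\
&= \E_{X\sim P_\beta}\!\bigl[U'_\beta(X)^2\bigr] - \bigl(\E_{X\sim P_\beta}[U'_\beta(X)]\bigr)^2 \\
&= \Var_{P_\beta(X)}[U'_\beta(X)],
\end{align*}
which is non-negative. Non-negativity of a second derivative immediately gives that $\beta \mapsto \partial \log Z_\beta/\partial \beta$ is non-decreasing, yielding the final claim.

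The only delicate point is the exchange of differentiation and integration in the first step, which is a standard dominated-convergence argument of the same flavor as the one the authors already rely on to derive the first-derivative identity; the rest is a mechanical application of the log-derivative trick together with the affine dependence of $U_\beta$ on $\beta$, so I do not expect any genuine obstacle beyond writing the regularity assumption carefully.
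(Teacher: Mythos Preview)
Your proof is correct and follows essentially the same route as the paper: both differentiate once more, use that $U''_\beta(\vX)\equiv 0$ along the geometric path, and identify the result as $\E_{\vP_\beta}[U'^2]-(\E_{\vP_\beta}[U'])^2=\Var_{\vP_\beta}[U'_\beta(\vX)]$. The only cosmetic difference is that the paper applies the quotient rule directly to $\partial_\beta \log Z_\beta = Z_\beta^{-1}\partial_\beta Z_\beta$, whereas you differentiate the expectation form via the log-derivative trick; the algebra is the same.
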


\begin{lemma}
The derivative of the log partition function evaluated at $\beta = 1$ and $\beta = 0$ is related to the Kullback-Leibler (KL) divergence between the distributions $\vP_1$ and $\vP_0$, and the log ratio of the partition functions $Z_1$ and $Z_0$, as follows:
\begin{align}
   \frac{\partial \log Z_{\beta}}{\partial \beta}\bigg|_{\beta=1} &= \mathbb{D}_{\text{KL}}(\vP_1, \vP_0) + \log\left(\frac{Z_1}{Z_0}\right), \\
   \frac{\partial \log Z_{\beta}}{\partial \beta}\bigg|_{\beta=0} &= \mathbb{D}_{\text{KL}}(\vP_0, \vP_1) + \log\left(\frac{Z_1}{Z_0}\right).
\end{align}
\end{lemma}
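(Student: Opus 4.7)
The plan is to reduce everything to a direct computation using the identity \eqref{eq:background/thermo/dZdbeta} established earlier, namely
\[
\frac{\partial \log Z_{\beta}}{\partial \beta} = \E_{\vX\sim\vP_{\beta}}\bigl[U'_{\beta}(\vX)\bigr].
\]
The first step is to compute $U'_{\beta}(\vX)$ explicitly. Since by construction $\tilde{\vP}_{\beta}(\vX) = \tilde{\vP}_1(\vX)^{\beta}\tilde{\vP}_0(\vX)^{1-\beta}$, taking logarithms gives the \emph{affine}-in-$\beta$ potential
\[
U_{\beta}(\vX) = \beta\,\log\tilde{\vP}_1(\vX) + (1-\beta)\,\log\tilde{\vP}_0(\vX),
\]
so that $U'_{\beta}(\vX) = \log\bigl(\tilde{\vP}_1(\vX)/\tilde{\vP}_0(\vX)\bigr)$, which does not depend on $\beta$. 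This is the key simplification, and it is what makes the endpoints tractable.

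Next I would rewrite $U'_\beta$ in terms of the \emph{normalized} densities. Using $\tilde{\vP}_i = Z_i\,\vP_i$ for $i\in\{0,1\}$,
\[
\log\frac{\tilde{\vP}_1(\vX)}{\tilde{\vP}_0(\vX)} = \log\frac{\vP_1(\vX)}{\vP_0(\vX)} + \log\frac{Z_1}{Z_0}.
\]
Substituting this identity into the expectation and evaluating at the two endpoints then yields the claimed expressions: at $\beta=1$ we have $\vP_\beta = \vP_1$, so the expectation of $\log(\vP_1/\vP_0)$ under $\vP_1$ is exactly $\mathbb{D}_{\text{KL}}(\vP_1,\vP_0)$, while the constant $\log(Z_1/Z_0)$ passes through the expectation; this gives the first formula. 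Similarly, at $\beta=0$ we have $\vP_\beta=\vP_0$, and the expectation of $\log(\vP_1/\vP_0)$ under $\vP_0$ is a KL term plus the additive constant $\log(Z_1/Z_0)$.

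There is no real obstacle here beyond bookkeeping: the only nontrivial analytic input, the exchange of differentiation and integration justifying \eqref{eq:background/thermo/dZdbeta}, has already been invoked under the regularity conditions stated in the preceding remark, and the rest is algebra. The one subtlety worth flagging in the write-up is the sign and direction of the KL divergence at $\beta=0$, since $\E_{\vX\sim\vP_0}[\log(\vP_1/\vP_0)]$ must be handled in the convention used in the paper's statement; I would present the rearrangement explicitly so that both formulas emerge transparently from the same one-line manipulation applied at the two endpoints.
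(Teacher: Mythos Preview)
Your approach is correct and is the only natural one; the paper does not actually supply a proof of this lemma anywhere (it is stated in the main text and then used in the proofs of Lemmas~\ref{lem:varbound} and~\ref{lem:der2} without a separate derivation), so there is nothing to compare against beyond noting that your route via \eqref{eq:background/thermo/dZdbeta} and the explicit affine form $U'_\beta(\vX)=\log\bigl(\tilde{\vP}_1(\vX)/\tilde{\vP}_0(\vX)\bigr)$ is exactly the intended computation.

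You are right to flag the sign at $\beta=0$. Carrying your computation through gives
\[
\frac{\partial \log Z_{\beta}}{\partial \beta}\bigg|_{\beta=0}
=\E_{\vX\sim\vP_0}\Bigl[\log\frac{\vP_1(\vX)}{\vP_0(\vX)}\Bigr]+\log\frac{Z_1}{Z_0}
=-\mathbb{D}_{\text{KL}}(\vP_0,\vP_1)+\log\frac{Z_1}{Z_0},
\]
so the second displayed formula in the lemma as printed carries a sign error. With the corrected sign, the endpoint difference is the symmetric quantity $\mathbb{D}_{\text{KL}}(\vP_1,\vP_0)+\mathbb{D}_{\text{KL}}(\vP_0,\vP_1)$, which is what one expects (and what the monotonicity argument in Lemma~\ref{lem:4} requires to be nonnegative). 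The paper's subsequent use of this lemma in the proof of Lemma~\ref{lem:varbound} is not fully consistent with either sign convention, so in your write-up simply record the correct formula your derivation yields; the qualitative variance bound in Theorem~\ref{thm:4} is unaffected.
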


\begin{lemma}\label{lem:varbound}
The variance of the estimator $U^{\prime}_{\beta}(\vX)$ can be upper bounded as follows:
\begin{equation}
 \resizebox{1\hsize}{!}{$
\begin{aligned}
   \Var[U^{\prime}_{\beta}(\vX)]& \leq \mathbb{D}_{\text{KL}}(\vP_1, \vP_0) + \mathbb{D}_{\text{KL}}(\vP_0, \vP_1) + 2\log\left(\frac{Z_1}{Z_0}\right) \notag\\
   &\quad + \E_{\beta\sim U([0,1])} \left[\left(\frac{\partial \log Z_\beta}{\partial \beta}\right)^2\right] - \log^2\left(\frac{Z_1}{Z_0}\right).
   \label{eq:var}
\end{aligned}
$}
\end{equation}
\end{lemma}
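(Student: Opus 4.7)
The plan is to apply the law of total variance to the random variable $U'_\beta(\vX)$ under the joint sampling $\beta \sim U([0,1])$ and $\vX \mid \beta \sim \vP_\beta$, then evaluate each of the two resulting pieces using precisely the three facts provided just above: Lemma~\ref{lem:4} (second derivative of $\log Z_\beta$ equals the conditional variance), the endpoint lemma for $\partial_\beta \log Z_\beta$ at $\beta=0,1$, and the thermodynamic identity \eqref{eq:background/thermo/dZdbeta} that $\E_{\vX \sim \vP_\beta}[U'_\beta(\vX)] = \partial_\beta \log Z_\beta$.

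First, write
\begin{align*}
\Var[U'_\beta(\vX)] = \E_\beta\!\bigl[\Var_{\vX \sim \vP_\beta}[U'_\beta(\vX)]\bigr] + \Var_\beta\!\bigl[\E_{\vX \sim \vP_\beta}[U'_\beta(\vX)]\bigr].
\end{align*}
For the inner-variance piece, use Lemma~\ref{lem:4} to replace the conditional variance by $\partial_\beta^2 \log Z_\beta$. The outer expectation over uniform $\beta$ then reduces to an integral, and by the fundamental theorem of calculus
\begin{align*}
\int_0^1 \partial_\beta^2 \log Z_\beta \, d\beta = \partial_\beta \log Z_\beta\big|_{\beta=1} - \partial_\beta \log Z_\beta\big|_{\beta=0}.
\end{align*}
Substituting the two boundary values supplied by the endpoint lemma expresses this quantity directly in terms of $\mathbb{D}_{\text{KL}}(\vP_1,\vP_0)$, $\mathbb{D}_{\text{KL}}(\vP_0,\vP_1)$, and $\log(Z_1/Z_0)$, producing the KL contribution and the $2\log(Z_1/Z_0)$ boundary contribution.

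For the variance-of-the-conditional-mean piece, invoke \eqref{eq:background/thermo/dZdbeta} to replace $\E_{\vX \sim \vP_\beta}[U'_\beta(\vX)]$ by $\partial_\beta \log Z_\beta$, so
\begin{align*}
\Var_\beta\!\bigl[\partial_\beta \log Z_\beta\bigr] = \E_\beta\!\left[\bigl(\partial_\beta \log Z_\beta\bigr)^2\right] - \left(\int_0^1 \partial_\beta \log Z_\beta\, d\beta\right)^2.
\end{align*}
The remaining integral collapses via the fundamental theorem of calculus once more to $\log Z_1 - \log Z_0$, producing the $-\log^2(Z_1/Z_0)$ term. Summing the two contributions yields the claimed expression on the right-hand side of the lemma.

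The main obstacle I expect is the bookkeeping at Step~2: the raw evaluation of $\partial_\beta \log Z_\beta|_{\beta=1} - \partial_\beta \log Z_\beta|_{\beta=0}$ through the endpoint lemma needs to be upper-bounded (rather than left as an equality) to reach the stated form with a symmetric sum $\mathbb{D}_{\text{KL}}(\vP_1,\vP_0) + \mathbb{D}_{\text{KL}}(\vP_0,\vP_1) + 2\log(Z_1/Z_0)$. The natural way to close this gap is to insert nonnegative slack coming either from nonnegativity of a KL divergence or from the monotonicity of $\partial_\beta \log Z_\beta$ established in Lemma~\ref{lem:4}, so that the exact identity is weakened into the stated inequality. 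Once this sign/slack manipulation is handled, the rest of the argument is a direct assembly of the two variance pieces.
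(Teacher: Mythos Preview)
Your proposal is correct and follows essentially the same route as the paper: law of total variance, then Lemma~\ref{lem:4} to convert the inner variance to $\partial_\beta^2 \log Z_\beta$, fundamental theorem of calculus to reduce the $\beta$-integral to endpoint values, the endpoint lemma to substitute those values, and the identity $\E_{\vX\sim\vP_\beta}[U'_\beta(\vX)]=\partial_\beta\log Z_\beta$ together with $\int_0^1 \partial_\beta\log Z_\beta\,d\beta=\log(Z_1/Z_0)$ to handle the variance-of-conditional-mean piece. The paper likewise inserts an inequality at exactly the step you flag as the main obstacle, so your anticipation of needing nonnegative slack there matches what the paper does.
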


\begin{lemma}\label{lem:der2}
The following upper bound holds for the expected value of the squared derivative of the log partition function:
\begin{align*}
   &\E_{\beta\sim U([0,1])} \left[\left(\frac{\partial \log Z_\beta}{\partial \beta}\right)^2\right] \\
   &\leq 3\left(\mathbb{D}_{\text{KL}}(\vP_1, \vP_0) + \mathbb{D}_{\text{KL}}(\vP_0, \vP_1) + 2\left|\log\left(\frac{Z_1}{Z_0}\right)\right|\right)^2.
\end{align*}
\end{lemma}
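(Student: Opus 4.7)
The plan is to combine the monotonicity of $\beta \mapsto \partial_\beta \log Z_\beta$ established in Lemma \ref{lem:4} with the explicit endpoint values furnished by the preceding lemma on boundary derivatives to obtain a pointwise bound on $|\partial_\beta \log Z_\beta|$ that is uniform in $\beta \in [0,1]$, which I will then square and integrate against the uniform measure on $[0,1]$.

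First, since Lemma \ref{lem:4} gives $\partial_\beta^2 \log Z_\beta = \Var_{\vP_\beta}[U^{\prime}_\beta(\vX)] \geq 0$, the derivative $\partial_\beta \log Z_\beta$ is nondecreasing on $[0,1]$, so its value at any $\beta \in [0,1]$ is bracketed between its values at $\beta = 0$ and $\beta = 1$. Hence $|\partial_\beta \log Z_\beta|$ is pointwise bounded by the larger of the absolute values of these two endpoint derivatives. Substituting the closed-form endpoint expressions from the preceding lemma (each a signed sum of a KL divergence and $\log(Z_1/Z_0)$) and applying the triangle inequality, each endpoint in absolute value is bounded by $\mathbb{D}_{\text{KL}}(\vP_1,\vP_0) + \mathbb{D}_{\text{KL}}(\vP_0,\vP_1) + |\log(Z_1/Z_0)|$, and hence so is $|\partial_\beta \log Z_\beta|$ uniformly on $[0,1]$.

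Squaring this uniform bound and integrating over $\beta \sim U([0,1])$ then yields the claim, with the factor of $3$ and the looser coefficient $2$ on $|\log(Z_1/Z_0)|$ merely absorbing slack introduced by elementary estimates such as $(x+y+z)^2 \leq 3(x^2+y^2+z^2)$ when one prefers to separate the three nonnegative summands after squaring. The argument is largely mechanical once monotonicity is in hand; the only point of care is in sign-bookkeeping the endpoint expressions (one endpoint combines the KL term and $\log(Z_1/Z_0)$ with opposite signs) before passing through absolute values and applying the triangle inequality, so that the final bound is expressed entirely in terms of the nonnegative quantities $\mathbb{D}_{\text{KL}}(\vP_1,\vP_0)$, $\mathbb{D}_{\text{KL}}(\vP_0,\vP_1)$, and $|\log(Z_1/Z_0)|$ appearing in the statement.
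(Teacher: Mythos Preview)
Your argument is correct and in fact cleaner than the paper's own proof. Both rely on the convexity of $\beta\mapsto\log Z_\beta$ from Lemma~\ref{lem:4}, but you exploit it differently. The paper shifts by the value of the derivative at $\beta=0$, writes $\bigl(\partial_\beta\log Z_\beta\bigr)^2$ as $\bigl((\partial_\beta\log Z_\beta-\partial_\beta\log Z_\beta|_{\beta=0})+\partial_\beta\log Z_\beta|_{\beta=0}\bigr)^2$, expands, and then bounds each of the three resulting terms separately using monotonicity and the endpoint formulas. You instead use monotonicity to bracket $\partial_\beta\log Z_\beta$ between its endpoint values and hence bound $\lvert\partial_\beta\log Z_\beta\rvert$ uniformly by $\max\{\lvert\partial_\beta\log Z_\beta|_{\beta=0}\rvert,\lvert\partial_\beta\log Z_\beta|_{\beta=1}\rvert\}$, after which a single application of the triangle inequality to the endpoint expressions and squaring finishes the job. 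Your route actually yields the sharper bound $\bigl(\mathbb{D}_{\text{KL}}(\vP_1,\vP_0)+\mathbb{D}_{\text{KL}}(\vP_0,\vP_1)+\lvert\log(Z_1/Z_0)\rvert\bigr)^2$, so the factor $3$ and the coefficient $2$ in the stated lemma are pure slack from your point of view; your parenthetical remark invoking $(x+y+z)^2\le 3(x^2+y^2+z^2)$ is not quite the right inequality to cite here (no such separation is needed), but this is cosmetic and does not affect correctness. The paper's decomposition is more laborious and produces several cross terms that must be controlled individually, whereas your pointwise bound is a one-line consequence of monotonicity.
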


\begin{theorem}\label{thm:4}
Combining the results from the previous lemmas, we obtain the following upper bound for the variance of the estimator $U^{\prime}_{\beta}(\vX)$:
    \begin{equation}\label{eq:finalit}
 \resizebox{1\hsize}{!}{$
\begin{aligned}
     \Var[U^{\prime}_{\beta}(\vX)] &\leq 3\left(\mathbb{D}_{\text{KL}}(\vP_1, \vP_0) + \mathbb{D}_{\text{KL}}(\vP_0, \vP_1) + 2\left|\log\left(\frac{Z_1}{Z_0}\right)\right|\right)^2 \notag\\
    &\quad + \left(\mathbb{D}_{\text{KL}}(\vP_1, \vP_0) + \mathbb{D}_{\text{KL}}(\vP_0, \vP_1) + 2\left|\log\left(\frac{Z_1}{Z_0}\right)\right|\right).
\end{aligned}
$}
\end{equation}

\end{theorem}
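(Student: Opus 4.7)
The plan is to directly combine Lemma \ref{lem:varbound} and Lemma \ref{lem:der2}, followed by a short inequality step that absorbs the signed logarithmic terms into absolute values. Lemma \ref{lem:varbound} upper-bounds $\Var[U'_\beta(\vX)]$ by an expression whose only non-explicit ingredient is the quantity $\E_{\beta \sim U([0,1])}\!\left[(\partial \log Z_\beta / \partial \beta)^2\right]$, and Lemma \ref{lem:der2} supplies precisely the bound needed for that quantity. The first step is therefore to substitute the Lemma \ref{lem:der2} bound into the right-hand side of the Lemma \ref{lem:varbound} bound.

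After the substitution I would obtain the intermediate inequality
\begin{align*}
\Var[U'_\beta(\vX)]
&\leq 3\!\left(\mathbb{D}_{\text{KL}}(\vP_1,\vP_0) + \mathbb{D}_{\text{KL}}(\vP_0,\vP_1) + 2\!\left|\log\!\frac{Z_1}{Z_0}\right|\right)^{2} \\
&\quad + \mathbb{D}_{\text{KL}}(\vP_1,\vP_0) + \mathbb{D}_{\text{KL}}(\vP_0,\vP_1) + 2\log\!\frac{Z_1}{Z_0} - \log^{2}\!\frac{Z_1}{Z_0}.
\end{align*}
The second step is to absorb the last three terms into the linear-in-KL summand appearing in \eqref{eq:finalit}. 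For this I would note two elementary facts: the term $-\log^{2}(Z_1/Z_0)$ is non-positive and may be dropped, and the inequality $2\log(Z_1/Z_0) \leq 2|\log(Z_1/Z_0)|$ holds irrespective of the sign of $\log(Z_1/Z_0)$. Applying both produces the linear summand $\mathbb{D}_{\text{KL}}(\vP_1,\vP_0) + \mathbb{D}_{\text{KL}}(\vP_0,\vP_1) + 2|\log(Z_1/Z_0)|$ exactly as required, and concatenating with the quadratic term from Lemma \ref{lem:der2} gives \eqref{eq:finalit}.

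Because the analytic work has already been paid in Lemmas \ref{lem:varbound} and \ref{lem:der2}, there is essentially no substantive obstacle remaining at this stage. The only point worth flagging as the main (and admittedly mild) obstacle is the sign-absorption step: one must ensure the bound holds uniformly regardless of whether $Z_1 \geq Z_0$ or $Z_1 < Z_0$, but this follows immediately from $x \leq |x|$ applied to $x = \log(Z_1/Z_0)$. With that verified, the two-step combination yields the stated bound and closes the proof.
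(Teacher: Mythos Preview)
Your proposal is correct and mirrors the paper's own proof almost exactly: both combine Lemma~\ref{lem:varbound} with Lemma~\ref{lem:der2} and then use $2\log(Z_1/Z_0)\le 2|\log(Z_1/Z_0)|$ together with $-\log^2(Z_1/Z_0)\le 0$ to reach \eqref{eq:finalit}. The only cosmetic difference is the order of operations---the paper performs the sign-absorption step before inserting the Lemma~\ref{lem:der2} bound, whereas you do it afterward---but this does not affect the argument.
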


The resulting upper bound in \eqref{eq:finalit} provides a computable expression for the variance of the estimator $U^{\prime}_{\beta}(\vX)$ in terms of the KL divergences between the distributions $\vP_1$ and $\vP_0$, and the log ratio of their partition functions. This bound can be useful for analyzing the properties of the estimator and its convergence behavior. The proof of the Theorem \ref{thm:4} is in the supplementary materials.

\begin{figure*}[t]
    \centering
    \begin{subfigure}[b]{0.49\linewidth}
        \centering
        \includegraphics[width=\linewidth]{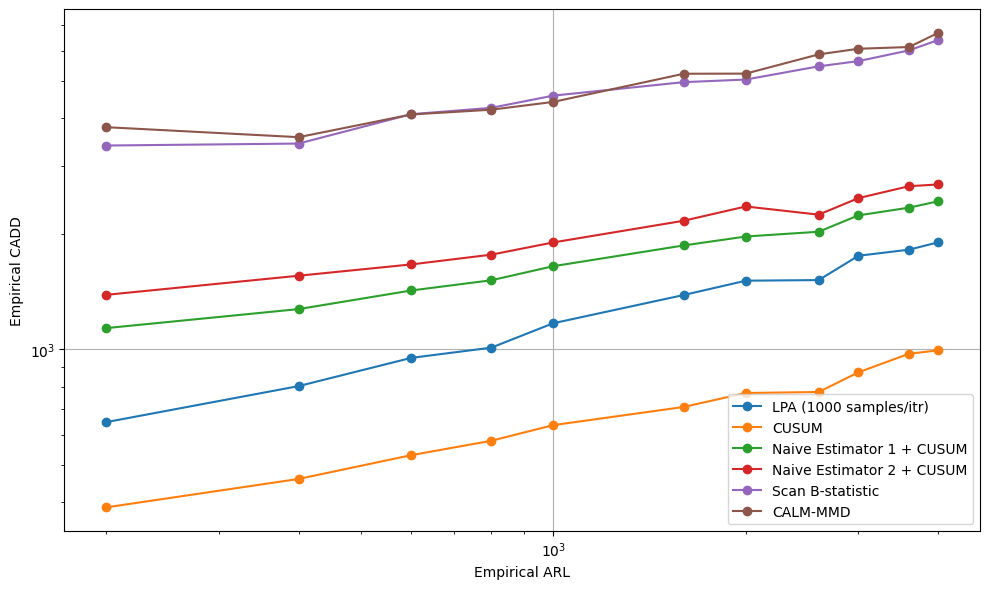}
        \caption{Empirical CADD against Empirical ARL for the EXP.}
        \label{fig:empirical_cadd_vs_arl_exp}
    \end{subfigure}
    \hfill
    \begin{subfigure}[b]{0.49\linewidth}
        \centering
        \includegraphics[width=\linewidth]{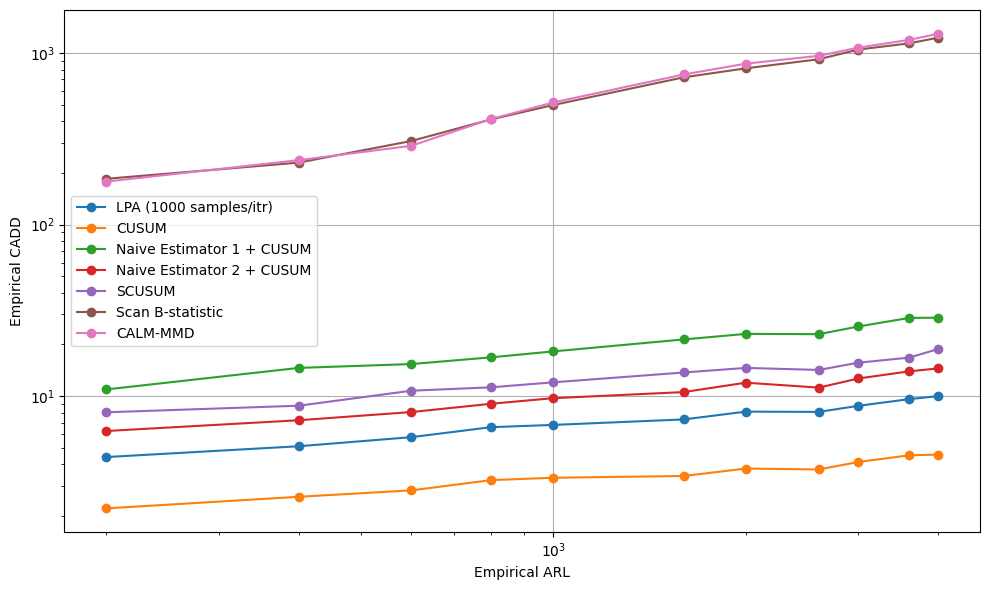}
        \caption{Empirical CADD against Empirical ARL for MVN.}
        \label{fig:empirical_cadd_vs_arl_mvn}
    \end{subfigure}
    \caption{Comparison of Empirical CADD vs. ARL for EXP (left) and MVN (right).}
    \label{fig:combined_cadd_vs_arl}
\end{figure*}

\section{NUMERICAL RESULTS}
\label{sec:results}

In this section, we conduct comprehensive numerical experiments on synthetic data to evaluate the performance of our proposed method in comparison with established change detection algorithms for unnormalized data. Specifically, we compare against SCUSUM \cite{wu2023quickest}, Scan B-statistic \cite{li2019scan}, CALM-MMD \cite{cobb2022sequential}, and two naive estimation approaches:

\begin{itemize}
    \item The first naive method estimates $ Z_0 $ as $ \hat{Z}_0 $ and $ Z_1 $ as $ \hat{Z}_1 $, then applies $ \log\frac{\hat{Z}_0}{\hat{Z}_1} $ within the CUSUM framework.
    \item The second naive method follows the approach described in Equation \eqref{eq:naive2} of the paper.
\end{itemize}
Throughout the experiments, we refer to our proposed algorithm, presented in Algorithm~\ref{alg:LPA_CUSUM}, as LPA. The score-based method from \cite{wu2023quickest} is denoted as SCUSUM, while the classical Cumulative Sum method from \cite{page1955test} is referred to as CUSUM. Additionally, we refer to the first naive estimator as Naive Estimator 1 and the second naive estimator as Naive Estimator 2.
\paragraph{Multivariate Normal Distribution (MVN).}
We consider synthetic data generated from a 10-dimensional multivariate normal distribution. The pre-change distribution is characterized by a mean vector $\boldsymbol{\mu} = \boldsymbol{0}$ and a covariance matrix. For the post-change scenario, we investigate both mean shifts and covariance changes. Specifically, we set the post-change mean as $\boldsymbol{\mu}_1 = \boldsymbol{1}$ and modify the covariance components with a small change.
\paragraph{Boltzmann Distribution (EXP).}  
We also consider the Boltzmann distribution, which models the probability of a system being in a certain state as a function of that state and the system's temperature. Our focus is on the problem of temperature change detection. The distribution is given by:  
\begin{equation*}  
    \vP(\vX) = \frac{1}{T} \exp\left(-\frac{\vX}{T} \right),  
\end{equation*}  
where $T$ represents the temperature of the system.

\paragraph{Simulation Setting.} We simulate LPA with 1000 samples in each iteration to estimate the log partition function. 
We generate synthetic data streams from two sources: the 10-dimensional Multivariate Normal Distribution (MVN) and a Boltzmann distribution. For MVN, we specify different pre- and post-change settings to test the sensitivity of our methods. The pre-change data follows a 10-dimensional multivariate normal distribution with mean vector $\boldsymbol{\mu} = \boldsymbol{0}$ and a predefined covariance matrix. The post-change data is generated with a shifted mean $\boldsymbol{\mu}_1 = \boldsymbol{1}$ while retaining a close covariance structure ($\|\Sigma_0 -  \Sigma_1\|_{2} \leq 1$). We generate $\Sigma_0$ randomly and ensure that it is a proper covariance matrix. Then, we set $ \Sigma_1 = \Sigma_0 + \Sigma_\epsilon$, where $\Sigma_\epsilon$ is a random matrix that ensures $ \Sigma_1$ is a proper covariance matrix and satisfies $\|\Sigma_\epsilon\|_2 \leq 1$.  The specific structure of the covariance matrix can be found in the supplementary materials. For Boltzmann distribution , we use $T=1$ for pre-change distribution and $T=1.2$ for post-change distribution.

The performance of the algorithms is evaluated in terms of empirical Average Run Length (ARL) and empirical Conditional Average Detection Delay (CADD), defined by $\mathbb{E}_{\infty}[T]$ and $\mathbb{E}_{\nu}[T - \nu \mid T \geq \nu]$, respectively. The empirical ARL measures the average time until a false alarm occurs when no change is present, with larger values indicating fewer false alarms. The empirical CADD assesses the detection delay after a change occurs, with smaller values signifying faster detection. All results for empirical ARL and CADD are presented on a logarithmic scale. In each experiment, the change point is set at $\nu = 500$, with a total stream length fixed at $10,000$ to ensure sufficient data for detection. The ARL values are chosen from the range $200$ to $10,000$.

Note that for the Boltzmann model, the SCUSUM algorithm is unable to detect the change because the score of the distribution is a constant. As a result, SCUSUM is ineffective in this setting, which is why we did not include it as a baseline. 

We observe that the CADD for the LPA algorithm decreases as the number of samples for the log partition function estimation increases. Furthermore, as the number of samples increases, the performance of the LPA algorithm approaches that of the CUSUM algorithm, which is consistent with our theoretical results. Also, with sufficiently large samples for estimating the partition function, LPA performs better than SCUSUM.
\section*{Acknowledgements}
This work was supported by the US National Science Foundation
under the award numbers 2334898, 2334897, and ECCS-2335876.


\bibliographystyle{apalike}
\bibliography{refs}
\newpage
\appendix
\onecolumn

\section*{SUPPLEMENTARY MATERIALS}

\section{Proof of Lemma \ref{lemma:tn_y} }
 
     The proof utilizes Jensen's Inequality to establish the inequality between the expectations of exponential functions of ${\vT}_{i,n}$ and $\vY$. Consider the definition of ${\vT}_{i,n}$ as the average of $n$ independent and identically distributed random variables $\vY_{1,n}, \vY_{2,n}, \dots, \vY_{i,n}$, denoted as $\vY$. Then, we have
$$\E_{0}[\exp(\gamma {\vT}_{i,n})] = \left(\E_{0}[\exp(\frac{1}{i}\gamma \vY)]\right)^i \leq \E_{0}[\exp(\gamma \vY)],$$
where the first equality comes from independence and the last inequality follows directly from Jensen's Inequality, as the exponential function is a convex function.

\section{Proof of Lemma \ref{lem:existanceof good gamma}}
 The first part of the lemma follows directly from a previously established result (lemma 2 in  \cite{wu2023quickest}), which provides the existence of a positive $\gamma$ satisfying the inequality in equation \eqref{eq:samegamma1}. Leveraging the fact that $\vY_i$s are independent and identically distributed (i.i.d.), we proceed to analyze the expectation of the product of exponential functions involving ${\vT}_{i,n}$ and $\vX_n$. Specifically,
  \begin{align}
\E_{0}&[\exp(\gamma\log\frac{\tilde \vP_1(\vX_n)}{\tilde \vP_0(\vX_n)}+\gamma{\vT}_{i,n})]\\
&=\E_{0}[\exp(\gamma\log\frac{\tilde \vP_1(\vX_n)}{\tilde \vP_0(\vX_n)})].\E_{0}[\exp(\gamma{\vT}_{i,n})]\\
&\leq \E_{0}[\exp(\gamma\log\frac{\tilde \vP_1(\vX_n)}{\tilde \vP_0(\vX_n)})].\E_{0}[\exp(\gamma \vY)] \\
&\leq \E_{0}[\exp(\gamma\log\frac{\tilde \vP_1(\vX_n)}{\tilde \vP_0(\vX_n)}+\gamma \vY)] \leq 1,
  \end{align}
  where we used independence of ${\vT}_{i,n}$ and $\vX_n$ for the first equality, and lemma \ref{lemma:tn_y} for the second inequality.

\section{Proof of Theorem \ref{thm:1} }\label{sec:proofthm1}
    First, because of lemma \ref{lem:existanceof good gamma}, there exist a $\gamma$ that for all $n\geq 1$, equation \ref{eq:goodgamma} holds. Now, let's define
\begin{equation}
    Z_m^t := \gamma\sum_{n=m}^{t} \log\frac{\tilde \vP_1(\vX_n)}{\tilde \vP_0(\vX_n)}+\gamma\sum_{n=m}^{t}{\vT}_{i,n},
\end{equation}
and define the stopping times
$$
\eta_{\ell+1} \triangleq \inf \left\{k \geq \eta_{\ell}+1: Z_{\eta_{\ell}+1}^k<0\right\},
$$
for $\ell \geq 0$ where $\eta_0 \triangleq 0$. Since $\E_{0}\left[\exp(\gamma\log\frac{\tilde \vP_1(\vX_n)}{\tilde \vP_0(\vX_n)}+\gamma{\vT}_{i,n})\right] \leq 1$, we have that $\left\{\exp \left(Z_n^k\right), \mathcal{F}_k, k \geq n\right\}$ is a non-negative supermartingale. Then, on events $\left\{\eta_{\ell}<\infty\right\}$, we have

\begin{align}\label{eq1}
& \mathbb P\left(\max _{t>\eta_{\ell}} Z_{\eta_{\ell}+1}^t \geq h \mid \mathcal{F}_{\eta_{\ell}}\right) \\
& \quad \leq e^{-h} \E_{0}\left[\exp(\gamma\log\frac{\tilde \vP_1(\vX_{\eta_{\ell}+1})}{\tilde \vP_0(\vX_{\eta_{\ell}+1})}+\gamma{\vT}_{\eta_{\ell}+1})\mid \mathcal{F}_{\eta_{\ell}}\right]\\& \quad\leq e^{-h},
\end{align}

where the first inequality is the maximal inequality for nonnegative supermartingales, and the second inequality is due to independence and the lemma condition.

Now, let's define $N_z$ as a lower bound on the number of zero crossings of the test statistic before a threshold crossing in the sense that
$$
N_z \triangleq \inf \left\{\ell \geq 0: \eta_{\ell}<\infty \text { and } \max _{t>\eta_{\ell}} Z_{\eta_{\ell}+1}^t \geq h\right\}.
$$
Then, for any $\ell \geq 0$, we have that
$$
\begin{aligned}
\vP_0 & \left(N_z>\ell\right) \\
& =\vP_0\left(N_z>\ell-1 \text { and } N_z>\ell\right) \\
& =E\left[\mathbb{I}\left\{N_z>\ell-1\right\} \vP_0\left(N_z>\ell \mid \mathcal{F}_{\eta_{\ell}}\right)\right] \\
& =E\left[\mathbb{I}\left\{N_z>\ell-1\right\} \vP_0\left(\max _{t>\eta_{\ell}} Z_{\eta_{\ell}+1}^t<h \mid \mathcal{F}_{\eta_{\ell}}\right)\right] \\
& \geq \vP_0\left(N_z>\ell-1\right)\left(1-e^{-h}\right),
\end{aligned}
$$
where the second equality follows from the tower property of conditional expectation by noting that $\mathbb{I}\left\{N_z \geq \ell\right\}$ is $\mathcal{F}_{\eta_{\ell}}$ measurable, the third equality follows from the definition of $N_z$, and the last line follows from \eqref{eq1}. Since $N_z$ is non-negative and $e^{-h} \leq 1$ for $h>0$, it follows that
$$
\begin{aligned}
\E_{0}\left[\tau\right]\geq\E_{0}\left[N_z\right] & \geq \sum_{\ell=0}^{\infty} \vP_0\left(N_z>\ell\right) \\
& \geq \sum_{\ell=0}^{\infty}\left(1-e^{-h}\right)^{\ell} \\
& \geq e^h.
\end{aligned}
$$

\section{Proof of the Theorem \ref{thm:delay}}\label{sec:proof of thmdelay}
This Proof is inspired by the proof of Theorem 8.2.6 of  \cite{tartakovsky2014sequential}.
\begin{lemma}\label{lemma:delay}
Define
\begin{equation}
Z_m^t = \gamma\sum_{n=m}^{t} \log\frac{\tilde{\vP}_1(\vX_n)}{\tilde{\vP}_0(\vX_n)} + \gamma\sum_{n=m}^{t}{\vT}_{i,n}.
\end{equation}

Assuming that the change occurs at $\nu$, the weak law of large numbers can be applied, and for $m \geq \nu$, we obtain
\begin{align*}
&\lim_{k \to \infty} \mathbb{P}^{\vP_1,\vP_0}_{\nu} \left(
\left| k^{-1} Z_{m}^{m+k-1} - \gamma \mathbb{D}_{\text{KL}}(\vP_1, \vP_0) \right| \geq \delta \right) = 0.
\end{align*}
\end{lemma}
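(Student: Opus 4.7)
The plan is to decompose the partial sum $Z_m^{m+k-1}/k$ into two empirical averages, handle each via the classical weak law of large numbers (WLLN), and add. Writing
\begin{equation*}
\frac{1}{k} Z_m^{m+k-1} \;=\; \gamma \cdot \frac{1}{k}\sum_{n=m}^{m+k-1} \log\frac{\tilde{\vP}_1(\vX_n)}{\tilde{\vP}_0(\vX_n)} \;+\; \gamma \cdot \frac{1}{k}\sum_{n=m}^{m+k-1} {\vT}_{i,n},
\end{equation*}
I note that the two sums depend on disjoint (and hence independent) collections of random variables, because by hypothesis the oracle outputs $\vY_{j,n}$ are mutually independent of every $\vX_\ell$.

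For the first average, since $m\geq\nu$, the observations $\vX_m,\vX_{m+1},\dots$ are i.i.d.\ from $\vP_1$, so the summands are i.i.d.\ with common mean
\begin{equation*}
\E_{\vP_1}\!\left[\log\frac{\tilde{\vP}_1(\vX)}{\tilde{\vP}_0(\vX)}\right]
\;=\; \E_{\vP_1}\!\left[\log\frac{\vP_1(\vX)}{\vP_0(\vX)}\right] + \log\frac{Z_1}{Z_0}
\;=\; \mathbb{D}_{\text{KL}}(\vP_1,\vP_0) + \log\frac{Z_1}{Z_0},
\end{equation*}
and the WLLN yields convergence in probability of this first average to $\gamma\bigl(\mathbb{D}_{\text{KL}}(\vP_1,\vP_0) + \log(Z_1/Z_0)\bigr)$. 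For the second average, each $\vY_{j,n}$ is an unbiased estimator of $\log(Z_0/Z_1)$ with variance $\sigma^2$, and the $\vY_{j,n}$ are mutually independent across $(j,n)$; hence ${\vT}_{i,n}$, for $n=m,m{+}1,\dots$, are themselves i.i.d.\ with mean $\log(Z_0/Z_1)$. A second application of the WLLN gives convergence in probability to $\gamma\log(Z_0/Z_1)$.

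Combining via a union bound on the two $\delta/2$-deviation events, the sum converges in probability to
\begin{equation*}
\gamma\!\left(\mathbb{D}_{\text{KL}}(\vP_1,\vP_0) + \log\tfrac{Z_1}{Z_0}\right) + \gamma \log\tfrac{Z_0}{Z_1} \;=\; \gamma\,\mathbb{D}_{\text{KL}}(\vP_1,\vP_0),
\end{equation*}
which is exactly the claimed limit. I expect the proof to reduce to careful bookkeeping: the only nonroutine points are verifying the integrability hypotheses required by the two WLLN invocations. The first needs $\mathbb{D}_{\text{KL}}(\vP_1,\vP_0)$ and $|\log(Z_1/Z_0)|$ to be finite, both implicit in the paper's standing assumptions; the second needs $\E\bigl[|\vY_{1,1}|\bigr]<\infty$, which follows immediately from the finite-variance assumption $\sigma^2<\infty$ on the oracle output. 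There is no substantive analytical obstacle beyond these routine checks.
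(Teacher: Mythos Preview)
Your proof is correct and follows essentially the same approach as the paper: both split $k^{-1}Z_m^{m+k-1}$ into two i.i.d.\ averages and apply the weak law of large numbers to each. The only cosmetic difference is the grouping---the paper writes each summand as $\log\frac{\vP_1(\vX_n)}{\vP_0(\vX_n)} + \bigl(\vT_{i,n}-\log\frac{Z_0}{Z_1}\bigr)$ so that the second piece is already mean-zero, and then bounds the variance explicitly (a Chebyshev argument) rather than citing the WLLN as a black box; your decomposition keeps the unnormalized log-ratio and $\vT_{i,n}$ separate and lets the $\log(Z_1/Z_0)$ terms cancel at the end, which is equally valid.
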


\begin{proof}

The key steps in the proof are:
\begin{itemize}
    \item Expressing $Z_m^t$ as the sum of the log-likelihood ratios and a martingale difference sequence $\{{\vT}_{i,n}\}$.
        \item Bounding the variance of $k^{-1} Z_{m}^{m+k-1}$ by considering the variance of the martingale difference sequence, which goes to zero as $k$ increases.

    \item Showing that the mean of $k^{-1} Z_{m}^{m+k-1}$ converges to $\gamma \mathbb{D}_{\text{KL}}(\vP_1, \vP_0)$ using the weak law of large numbers for the log-likelihood ratios.
\end{itemize}

We start by showing that the mean of $k^{-1} Z_{m}^{m+k-1}$ is $\mathbb{D}_{\text{KL}}(\vP_1, \vP_0)$ and then we show that the variance of $k^{-1} Z_{m}^{m+k-1}$ goes to zero.

For $n \geq \nu$, we have
\begin{align}
\E_\nu^{\vP_1,\vP_0}&\left[\gamma\left(\log\frac{\tilde{\vP}_1(\vX_n)}{\tilde{\vP}_0(\vX_n)} + {\vT}_{i,n}\right)\right]  = \E_\nu^{\vP_1,\vP_0}\left[\gamma \log\frac{\vP_1(\vX_n)}{\vP_0(\vX_n)}\right] = \gamma \mathbb{D}_{\text{KL}}(\vP_1, \vP_0).
\end{align}

For the variance, we have
\begin{equation}
\log\frac{\tilde{\vP}_1(\vX_n)}{\tilde{\vP}_0(\vX_n)} + {\vT}_{i,n} = \log\frac{\vP_1(\vX_n)}{\vP_0(\vX_n)} + ({\vT}_{i,n} - \log \frac{Z_0}{Z_1}).
\end{equation}
So, we focus on bounding the variance of the second part of $k^{-1} Z_{m}^{m+k-1}$ since the variance of the first part goes to zero. We have
\begin{align}
\notag&\Var\left(\frac{1}{k}\sum_{n=m}^{m+k-1}\left({\vT}_{i,n} - \log \frac{Z_0}{Z_1}\right)\right) =\frac{1}{k^2}\sum_{n=m}^{m+k-1} \Var\left({\vT}_{i,n} - \log \frac{Z_0}{Z_1}\right)\leq \frac{1}{k^2}k\sigma^2 = \frac{1}{k}\sigma^2,
\end{align}
where $\sigma^2$ is the (finite) variance of ${\vT}_{i,n} - \log \frac{Z_0}{Z_1}$.
\end{proof}
\begin{proof}[Proof of Theorem \ref{thm:delay}]

Now, consider any arbitrary $\delta \in (0, 1)$ and define the integer $$k_c = \lfloor\frac{ \log h }{\gamma\mathbb{D}_{\text{KL}}(\vP_1, \vP_0)(1 - \delta)} \rfloor,$$ for any $\log h > 0$, where $\lfloor \cdot \rfloor$ denotes the floor function.

Based on Lemma \ref{lemma:delay}, we have
\begin{equation}
\sup_{1 \leq \nu \leq t} \vQ_\nu\left(Z_t^{t+k_c-1} < \log h\right) < \delta,
\end{equation}
for sufficiently large $h$, and any $t \geq 1$. For sufficiently large $h$,
\begin{align}
&\notag\operatorname{ess} \sup \vQ_\nu\left(((\tau-\nu+1)^{+} > tk_c \mid \mathcal{F}_{\nu-1}\right)\leq \prod_{j=1}^t \vQ_\nu\left(Z_{\nu+(j-1)k_c}^{\nu+jk_c-1} < \log h\right) \leq \delta^t,
\end{align}
which means
\begin{equation}
\begin{aligned}
\operatorname{ess} \sup \E_\nu\left[\left(((\tau-\nu+1)^{+}\right)
\mid \mathcal{F}_{\nu-1}\right] &=\operatorname{ess} \sup \int_0^{\infty} \vQ_\nu\left(((\tau-\nu+1)^{+} > x \mid \mathcal{F}_{\nu-1}\right) \mathrm{d} x \\
&\leq \sum_{t=0}^{\infty} k_c \times \operatorname{ess} \sup \vQ_\nu\left(((\tau-\nu+1)^{+} > tk_c \mid \mathcal{F}_{\nu-1}\right) \\
&\leq k_c\sum_{t=0}^{\infty} \delta^t.
\end{aligned}
\end{equation}
Therefore,
\begin{align}
\operatorname{ess} \sup \E_\nu\left[\left((\tau-\nu+1)^{+}\right) \mid \mathcal{F}_{\nu-1}\right] \notag\leq (1+o(1))\left(\frac{\log h}{\gamma \mathbb{D}_{\text{KL}}(\vP_1, \vP_0)}\right) \frac{\sum_{t=0}^{\infty} \delta^t}{(1-\delta)}.
\end{align}  
\end{proof}
\section{Proof of the Theorem \ref{thm:asy_opt} }

Let us define 
\begin{align*}
M_\vY(\gamma) := \E_0\left[\exp\left(\gamma \left(\vY - \log\frac{Z_0}{Z_1}\right)\right)\right],
\end{align*}
where $\vY$ is a random variable representing the output of the oracle $\mathcal{A}$. Then, we have
\begin{align}
F(\gamma, \frac{1}{i}) :=& \E_0\left[\exp\left(\gamma\log\frac{\vP_1(\vX)}{\vP_0(\vX)} + \gamma{\vT}_{i,n} -
\gamma\log\frac{Z_0}{Z_1}\right)\right] \notag \\
&= \E_0\left[\exp\left(\gamma\log\frac{\vP_1(\vX)}{\vP_0(\vX)}\right) \left(M_\vY\left(\frac{\gamma}{i}\right)\right)^i\right].
\end{align}
Let us define a function that generalizes the previous one by replacing the second variable $i$ with a continuous variable $\beta$,
\begin{align}
F(\gamma, \beta) := \E_0\left[\exp\left(\gamma\log\frac{\vP_1(\vX)}{\vP_0(\vX)}\right) \left(M_\vY(\gamma\beta)\right)^{\frac{1}{\beta}}\right].
\end{align}
Using Taylor's series approximation, we have the following:
\begin{align}
F(\gamma, \beta) = F(1,0) + g_{\gamma}(1,0)(\gamma - 1) + g_{\beta}(1,0)(\beta)+er(\gamma, \beta),
\end{align}
where $g_{\gamma}(\gamma,\beta):=\frac{\partial F(\gamma,\beta)}{\partial \gamma}$, and $g_{\beta}(\gamma,\beta):=\frac{\partial F(\gamma,\beta)}{\partial \beta}$.

We have
\begin{align*}
g_{\beta}(\gamma, \beta) = \mathbb{E}_0\left[\exp\left(\gamma \log\frac{\vP_1(\vX)}{\vP_0(\vX)}\right) \left(M_\vY(\gamma \beta)\right)^{\frac{1}{\beta}} \left( \frac{-\log(M_\vY(\gamma \beta))}{\beta^2} + \frac{\gamma M_\vY'(\gamma \beta)}{\beta M_\vY(\gamma \beta)} \right)\right],
\end{align*}
and 
\begin{align*}
    g_{\gamma}(\gamma, \beta) = \mathbb{E}_0\left[\log\frac{\vP_1(\vX)}{\vP_0(\vX)} \exp\left(\gamma \log\frac{\vP_1(\vX)}{\vP_0(\vX)}\right) \left(M_\vY(\gamma \beta)\right)^{\frac{1}{\beta}} + \exp\left(\gamma \log\frac{\vP_1(\vX)}{\vP_0(\vX)}\right) \left(M_\vY(\gamma \beta)\right)^{\frac{1}{\beta} - 1} \cdot  M_\vY'(\gamma \beta)\right].
\end{align*}

Since $F(1,0) = 1$, $g_{\gamma}(1,0) = \mathbb{D}_{\text{KL}}(\vP_1, \vP_0)$, and $g_{\beta}(1,0) = \frac{\sigma^2}{2}$, where $\sigma^2$ is the variance of $\vY$, if we want to find the relationship between $\gamma$ and $\beta$ that makes 
\begin{align}
F(1,0) + g_\gamma(1,0) (\gamma - 1) + g_\beta(1,0) (\beta)= 1-\epsilon\beta,
\end{align}
this results in
\begin{align}
1 + \mathbb{D}_{\text{KL}}(\vP_1, \vP_0) (\gamma - 1) + \frac{\sigma^2}{2}(\beta) = 1-\epsilon\beta,
\end{align}
and consequently, we should let
\begin{align}\label{eq:choiceofgamma}
\gamma_{0}= 1 - \frac{\sigma^2}{2i \mathbb{D}_{\text{KL}}(\vP_1, \vP_0)}-\frac{2\epsilon}{2i \mathbb{D}_{\text{KL}}(\vP_1, \vP_0)}.
\end{align}
Equivalently, we can rewrite this as
\begin{align}\label{eq:choiceofbeta1}
\frac{1}{i} = 2(1 - \gamma_{0})\frac{\mathbb{D}_{\text{KL}}(\vP_1, \vP_0)}{\sigma^2+2\epsilon}.
\end{align}

Now, we want to show that $F(\gamma_{0}, \frac{1}{i}) \leq 1$ for large enough $i$. Let us define
\begin{align}\label{eq:choiceofgamma}
\beta_0 := 2(1 - \gamma)\frac{\mathbb{D}_{\text{KL}}(\vP_1, \vP_0)}{\sigma^2+2\epsilon}.
\end{align}
and the function $h$ as
\begin{align*}
    h(\gamma):=F(\gamma,\beta_0).
\end{align*}
We have that 
\begin{align*}
    \frac{\partial h(\gamma)}{\partial \gamma} = \frac{\partial F(\gamma, \beta_0)}{\partial \gamma} + \frac{\partial F(\gamma, \beta_0)}{\partial \beta} \cdot \frac{\partial \beta_0}{\partial \gamma},
\end{align*}
where
\begin{align*}
    \frac{\partial \beta_0}{\partial \gamma} = -2 \frac{\mathbb{D}_{\text{KL}}(\vP_1, \vP_0)}{\sigma^2 + 2\epsilon}.
\end{align*}
For small enough $\delta$, when $1-\delta < \gamma \leq 1$, we have that 
\begin{align*}
    g_{\beta}(\gamma, \beta_0)= g_{\beta}(1, 0)+\alpha_1=\frac{\sigma^2}{2}+\alpha_1,
\end{align*}
and 
\begin{align*}
    g_{\gamma}(\gamma, \beta_0)= g_{\gamma}(1, 0)+\alpha_2=\mathbb{D}_{\text{KL}}(\vP_1, \vP_0)+\alpha_2,
\end{align*}
\begin{align*}
    \frac{\partial h(\gamma)}{\partial \gamma} = \mathbb{D}_{\text{KL}}(\vP_1, \vP_0) + \frac{\sigma^2}{2} \left(-2 \frac{\mathbb{D}_{\text{KL}}(\vP_1, \vP_0)}{\sigma^2 + 2\epsilon}\right) + \alpha_1 + \alpha_2 > 0.
\end{align*}
Note that in the above, if we make $\delta$ small enough, both $\alpha_1$ and $\alpha_2$ are negligible compared to the rest, and that is why the derivative is positive. This means
\begin{align*}
    1 = F(1,0) = h(1) \geq h(\gamma).
\end{align*}

Therefore, for $i > \frac{\sigma^2 + 2\epsilon}{2\delta \mathbb{D}_{\text{KL}}}$, since the derivative is positive, we have $F(\gamma_{0}, \frac{1}{i}) \leq F(1, 0)=1$.

\section{Proof of Theorem \ref{thm:4}}\label{sec:proofthm3}

    By combining Lemma \ref{lem:varbound} and Lemma \ref{lem:der2}, we arrive at the Theorem \ref{thm:4}.

Using Lemma \ref{lem:varbound}, we have
\begin{align}
   \Var[U'(\vX)] &\leq \mathbb{D}_{\text{KL}}(\vP_1, \vP_0) + \mathbb{D}_{\text{KL}}(\vP_0, \vP_1) + 2\log\left(\frac{Z_1}{Z_0}\right) + \E_{\beta\sim U([0,1])} \left[\left(\frac{\partial \log Z_\beta}{\partial \beta}\right)^2\right] - \log^2\left(\frac{Z_1}{Z_0}\right)
\\&\leq \left(\mathbb{D}_{\text{KL}}(\vP_1, \vP_0) + \mathbb{D}_{\text{KL}}(\vP_0, \vP_1) + 2\left|\log\left(\frac{Z_1}{Z_0}\right)\right|\right) + \E_{\beta\sim U([0,1])} \left[\left(\frac{\partial \log Z_\beta}{\partial \beta}\right)^2\right] 
   \label{eq:varprime}
\end{align}
Now, we plug in the bound in the Lemma \ref{lem:der2}, to obtain 
\begin{align}
     \Var[U'(\vX)] &\leq 3\left(\mathbb{D}_{\text{KL}}(\vP_1, \vP_0) + \mathbb{D}_{\text{KL}}(\vP_0, \vP_1) + 2\left|\log\left(\frac{Z_1}{Z_0}\right)\right|\right)^2 \notag\\
    &\quad + \left(\mathbb{D}_{\text{KL}}(\vP_1, \vP_0) + \mathbb{D}_{\text{KL}}(\vP_0, \vP_1) + 2\left|\log\left(\frac{Z_1}{Z_0}\right)\right|\right).
\end{align}
\section{Proof of Lemma \ref{lem:4}}

We have
\begin{align}
   \frac{\partial^2 \log Z_{\beta}}{\partial \beta^2} &= \frac{-1}{Z_{\beta}^2} \left(\frac{\partial}{\partial \beta}Z_{\beta}\right)^2 + \frac{1}{Z_{\beta}} \left(\frac{\partial^2}{\partial \beta^2}Z_{\beta}\right) \notag\\
   &= -\left(\E_{\vX\sim\vP_{\beta}}\big[U'(\vX)\big]\right)^2 + \E_{\vX\sim\vP_{\beta}}\big[U'(\vX)^2\big] \notag\\&\quad+ \E_{\vX\sim\vP_{\beta}}\big[U''(\vX)\big].
\end{align}
By the choice of $\tilde{\vP}_{\beta}(\vX) = \tilde{\vP}_1(\vX)^{\beta}\tilde{\vP}_0(\vX)^{1 - \beta}$, we have $U''(\vX) = 0$. Therefore,
\begin{align}
   \frac{\partial^2 \log Z_{\beta}}{\partial \beta^2} &= -\left(\E_{\vX\sim\vP_{\beta}}\big[U'(\vX)\big]\right)^2 + \E_{\vX\sim\vP_{\beta}}\big[U'(\vX)^2\big] \\
   &= \Var_{\vP_{\beta}(\vX)}[U'(\vX)] \geq 0.
\end{align}
This means that $\frac{\partial \log Z_{\beta}}{\partial \beta}$ is an increasing function of $\beta$.

\section{Proof of Lemma \ref{lem:varbound}}

Using the law of total variance, the variance of the estimator can be written as
\begin{align}
   \Var[U'(\vX)] &= \E_{\beta\sim U([0,1])} \big[\Var_{\vP_{\beta}(\vX)}[U'(\vX)]\big] \notag\\
   &\quad + \Var \big[\E_{\vP_\beta(\vX)}\left[U'(\vX)\right]\big] \\
   &= \E_{\beta\sim U([0,1])} \left[\frac{\partial^2 \log Z_{\beta}}{\partial \beta^2}\right] \notag\\
   &\quad + \E_{\beta\sim U([0,1])} \left[\left(\E_{\vP_\beta(\vX)}\left[U'(\vX)\right]\right)^2\right] \notag\\
   &\quad - \log^2\left(\frac{Z_1}{Z_0}\right).
\end{align}
Using the fact that $\frac{\partial \log Z_{\beta}}{\partial \beta}$ is an increasing function of $\beta$, and the results from the previous lemmas, we can upper bound the variance as

\begin{align}\label{eq:finalit22}
   \Var[U'(\vX)] &\leq \frac{\partial \log Z_{\beta}}{\partial \beta}\bigg|_{\beta=1} - \frac{\partial \log Z_{\beta}}{\partial \beta}\bigg|_{\beta=0} \notag\\
   &\quad + \E_{\beta\sim U([0,1])} \left[\left(\E_{\vP_\beta(\vX)}\left[U'(\vX)\right]\right)^2\right] \notag\\
   &\quad - \log^2\left(\frac{Z_1}{Z_0}\right) \\
   &\leq \mathbb{D}_{\text{KL}}(\vP_1, \vP_0) + \mathbb{D}_{\text{KL}}(\vP_0, \vP_1) \notag\\
   &\quad + 2\log\left(\frac{Z_1}{Z_0}\right) + \E_{\beta\sim U([0,1])} \left[\left(\frac{\partial \log Z_\beta}{\partial \beta}\right)^2\right] \notag\\
   &\quad - \log^2\left(\frac{Z_1}{Z_0}\right).
\end{align}

\section{Proof of Lemma \ref{lem:der2}}

Since $\frac{\partial \log Z_\beta}{\partial \beta}$ is an increasing function of $\beta$, we can use the following trick to bound the expected value:

\begin{align*}\label{eq:finalit33}
    &\E_{\beta\sim U([0,1])} \left[\left(\frac{\partial \log Z_\beta}{\partial \beta}\right)^2\right] \\
    &= \E_{\beta\sim U([0,1])} \left[\left(\frac{\partial \log Z_\beta}{\partial \beta} - \frac{\partial \log Z_{\beta}}{\partial \beta}\bigg|_{\beta=0} + \frac{\partial \log Z_{\beta}}{\partial \beta}\bigg|_{\beta=0}\right)^2\right] \\
    &\leq \E_{\beta\sim U([0,1])} \left[\left(\frac{\partial \log Z_\beta}{\partial \beta} - \frac{\partial \log Z_{\beta}}{\partial \beta}\bigg|_{\beta=0}\right)^2 + \left(\frac{\partial \log Z_{\beta}}{\partial \beta}\bigg|_{\beta=0}\right)^2\right] \notag\\
    &\quad + 2\E_{\beta\sim U([0,1])} \left[\frac{\partial \log Z_\beta}{\partial \beta} \cdot \frac{\partial \log Z_{\beta}}{\partial \beta}\bigg|_{\beta=0}\right].
\end{align*}

We used the fact that $\left(\frac{\partial \log Z_\beta}{\partial \beta} - \frac{\partial \log Z_{\beta}}{\partial \beta}\big|_{\beta=0}\right) \geq 0$ for all $\beta \in [0, 1]$, and the increasing property of $\frac{\partial \log Z_\beta}{\partial \beta}$ with respect to $\beta$.

Next, we can bound the first term as follows:
\begin{align*}
    &\E_{\beta\sim U([0,1])} \left[\left(\frac{\partial \log Z_\beta}{\partial \beta} - \frac{\partial \log Z_{\beta}}{\partial \beta}\bigg|_{\beta=0}\right)^2\right] \\
    &\leq \left(\frac{\partial \log Z_{\beta}}{\partial \beta}\bigg|_{\beta=1} - \frac{\partial \log Z_{\beta}}{\partial \beta}\bigg|_{\beta=0}\right)^2 \\
    &= \left(\mathbb{D}_{\text{KL}}(\vP_1, \vP_0) + \mathbb{D}_{\text{KL}}(\vP_0, \vP_1) + 2\log\left(\frac{Z_1}{Z_0}\right)\right)^2.
\end{align*}

For the second term, we have:
\begin{align*}
    &\E_{\beta\sim U([0,1])} \left[\frac{\partial \log Z_\beta}{\partial \beta} \cdot \frac{\partial \log Z_{\beta}}{\partial \beta}\bigg|_{\beta=0}\right] \\
    &= \left(\frac{\partial \log Z_{\beta}}{\partial \beta}\bigg|_{\beta=1} - \frac{\partial \log Z_{\beta}}{\partial \beta}\bigg|_{\beta=0}\right) \frac{\partial \log Z_{\beta}}{\partial \beta}\bigg|_{\beta=0} \\
    &= \left(\mathbb{D}_{\text{KL}}(\vP_1, \vP_0) + \mathbb{D}_{\text{KL}}(\vP_0, \vP_1) + 2\log\left(\frac{Z_1}{Z_0}\right)\right)\times \left(\mathbb{D}_{\text{KL}}(\vP_0, \vP_1) + \log\left(\frac{Z_1}{Z_0}\right)\right).
\end{align*}

Finally, for the last term, we have:
\begin{equation}\label{eq:finalit2}
    \E_{\beta\sim U([0,1])} \left[\left(\frac{\partial \log Z_{\beta}}{\partial \beta}\bigg|_{\beta=0}\right)^2\right] = \left(\mathbb{D}_{\text{KL}}(\vP_0, \vP_1) + \log\left(\frac{Z_1}{Z_0}\right)\right)^2.
\end{equation}

Putting everything together, we obtain the desired upper bound:
\begin{align*}
    \E_{\beta\sim U([0,1])} \left[\left(\frac{\partial \log Z_\beta}{\partial \beta}\right)^2\right] 
    \leq 3\left(\mathbb{D}_{\text{KL}}(\vP_1, \vP_0) + \mathbb{D}_{\text{KL}}(\vP_0, \vP_1) + 2\left|\log\left(\frac{Z_1}{Z_0}\right)\right|\right).
\end{align*}
\section{Experiments Details}
In this section, we will outline the specific settings in the simulations. The covariance matrices for the 10-dimensional Gaussian change detection simulation are the following matrices
\begin{align*}
\Sigma_0 = \begin{bmatrix}
1.0 & 0.6 & 0.4 & 0.2 & 0.1 & 0.05 & 0.03 & 0.02 & 0.01 & 0.01 \\
0.6 & 1.0 & 0.5 & 0.3 & 0.1 & 0.04 & 0.02 & 0.02 & 0.01 & 0.01 \\
0.4 & 0.5 & 1.0 & 0.4 & 0.3 & 0.1 & 0.05 & 0.03 & 0.02 & 0.01 \\
0.2 & 0.3 & 0.4 & 1.0 & 0.5 & 0.3 & 0.1 & 0.04 & 0.03 & 0.02 \\
0.1 & 0.1 & 0.3 & 0.5 & 1.0 & 0.6 & 0.4 & 0.2 & 0.1 & 0.05 \\
0.05 & 0.04 & 0.1 & 0.3 & 0.6 & 1.0 & 0.5 & 0.3 & 0.2 & 0.1 \\
0.03 & 0.02 & 0.05 & 0.1 & 0.4 & 0.5 & 1.0 & 0.6 & 0.4 & 0.3 \\
0.02 & 0.02 & 0.03 & 0.04 & 0.2 & 0.3 & 0.6 & 1.0 & 0.5 & 0.4 \\
0.01 & 0.01 & 0.02 & 0.03 & 0.1 & 0.2 & 0.4 & 0.5 & 1.0 & 0.6 \\
0.01 & 0.01 & 0.01 & 0.02 & 0.05 & 0.1 & 0.3 & 0.4 & 0.6 & 1.0
\end{bmatrix},
\end{align*}

\begin{align*}
\Sigma_1 = \begin{bmatrix}
1.2 & 0.7 & 0.5 & 0.3 & 0.15 & 0.1 & 0.07 & 0.05 & 0.03 & 0.02 \\
0.7 & 1.2 & 0.6 & 0.4 & 0.2 & 0.1 & 0.05 & 0.04 & 0.03 & 0.02 \\
0.5 & 0.6 & 1.2 & 0.5 & 0.4 & 0.2 & 0.1 & 0.07 & 0.05 & 0.03 \\
0.3 & 0.4 & 0.5 & 1.2 & 0.6 & 0.4 & 0.2 & 0.1 & 0.07 & 0.05 \\
0.15 & 0.2 & 0.4 & 0.6 & 1.2 & 0.7 & 0.5 & 0.3 & 0.2 & 0.1 \\
0.1 & 0.1 & 0.2 & 0.4 & 0.7 & 1.2 & 0.6 & 0.4 & 0.3 & 0.2 \\
0.07 & 0.05 & 0.1 & 0.2 & 0.5 & 0.6 & 1.2 & 0.7 & 0.5 & 0.4 \\
0.05 & 0.04 & 0.07 & 0.1 & 0.3 & 0.4 & 0.7 & 1.2 & 0.6 & 0.5 \\
0.03 & 0.03 & 0.05 & 0.07 & 0.2 & 0.3 & 0.5 & 0.6 & 1.2 & 0.7 \\
0.02 & 0.02 & 0.03 & 0.05 & 0.1 & 0.2 & 0.4 & 0.5 & 0.7 & 1.2
\end{bmatrix}.
\end{align*}

\end{document}